\newcolumntype{C}{>{\centering\arraybackslash}X}
\newcommand{{\method}}{FairGuide}
\newtheorem{myDef}{Definition}
\newtheorem{myPro}{Problem}
\newtheorem{theorem}{Theorem}
\newtheorem{lemma}{Lemma}
\begin{document}
\title{Let's Grow an Unbiased Community : Guiding the Fairness of Graphs via New Links}

\author{\textbf{Jiahua Lu}}
\affiliation{%
  \institution{Jilin University} 
  \country{}
}
\affiliation{%
  \institution{The Hong Kong University of Science and Technology (Guangzhou)} 
    \country{}
}
\email{lujh9920@mails.jlu.edu.cn}

\author{\textbf{Huaxiao Liu}}
\affiliation{%
  \institution{Jilin University}
  \country{}
}
\email{liuhuaxiao@jlu.edu.cn}

\author{\textbf{Shuotong Bai}}
\affiliation{%
  \institution{Jilin University} %
  \country{}
}
\email{baist20@mails.jlu.edu.cn}

\author{\textbf{Junjie Xu}}
\affiliation{%
  \institution{The Pennsylvania State University}
  \country{}
}
\email{junjiexu@psu.edu}

\author{\textbf{Renqiang Luo}}
\affiliation{%
  \institution{Jilin University} %
  \country{}
}
\email{lrenqiang@outlook.com}

\author{\textbf{Enyan Dai}}
\authornote{Corresponding author}
\affiliation{%
  \institution{The Hong Kong University of Science
and Technology (Guangzhou)} %
\country{}
}
\email{enyandai@hkust-gz.edu.cn}

\renewcommand{\shortauthors}{Lu et al.}

\begin{abstract}
Graph Neural Networks (GNNs) have achieved remarkable success across diverse applications. However, due to the biases in the graph structures, graph neural networks face significant challenges in fairness. Although the original user graph structure is generally biased, it is promising to guide these existing structures toward unbiased ones by introducing new links. The fairness guidance via new links could foster unbiased communities, thereby enhancing fairness in downstream applications. To address this issue, we propose a novel framework named {\method}. Specifically, to ensure fairness in downstream tasks trained on fairness-guided graphs, we introduce a differentiable community detection task as a pseudo downstream task. Our theoretical analysis further demonstrates that optimizing fairness within this pseudo task effectively enhances structural fairness, promoting fairness generalization across diverse downstream applications. Moreover, {\method} employs an effective strategy which leverages meta-gradients derived from the fairness-guidance objective to identify new links that significantly enhance structural fairness. Extensive experimental results demonstrate the effectiveness and generalizability of our proposed method across a variety of graph-based fairness tasks.
The codes and datasets are available at \url{https://github.com/ljhds/FairGuide}. 

\end{abstract}

\begin{CCSXML}
<ccs2012>
   <concept>
       <concept_id>10010147.10010257</concept_id>
       <concept_desc>Computing methodologies~Machine learning</concept_desc>
       <concept_significance>500</concept_significance>
       </concept>
 </ccs2012>
\end{CCSXML}

\ccsdesc[500]{Computing methodologies~Machine learning}

\keywords{Fairness; Graph Neural Networks}

\maketitle

\section{INTRODUCTION}

Graph data has become an essential part of many applications like social networks analysis~\cite{kumar2022influence}, recommendation systems~\cite{wu2022graph},  and  fraud detection~\cite{cheng2020graph}. In graph data, nodes typically represent entities or individuals, while edges capture the relationships between them. Graph Neural Networks (GNNs) have emerged as powerful tools for leveraging both node features and graph topology to perform tasks such as node classification~\cite{kipf2016semi,rong2020dropedge}, graph embedding~\cite{ying2018hierarchical,Zhu:2020vf}, and link prediction~\cite{zhang2018link}, leading to significant improvements in task performance. 

Despite the great performance of GNNs, linking biases in user graphs raise fairness concerns when deploying GNNs in critical applications. Specifically, as illustrated in Fig.~\ref{fig:intro}, online social networks and residential communities often exhibit pronounced structural barriers, with tightly interconnected subgroups interacting primarily within themselves, resulting in inequitable access to resources, opportunities, and influence~\cite{saxena2024fairsna}. Such biases originating from data can be further amplified by the message-passing mechanisms of GNNs. For instance, recommendation systems on social networks have been found to systematically prevent female profiles from becoming among the most commented or liked~\cite{bose2019compositional}. Similarly, GNN-based book recommendation systems can be biased toward recommending books authored by males~\cite{buyl2020debayes}. 

Biases inherent in user connections raise a critical yet underexplored question: \textit{Can we guide existing real-world user graphs toward unbiased structures, thereby ensuring fairness in the deployment of GNN-based applications on these graphs?}  
To address question, a  promising strategy is to introduce new connections between users to guide the growth of graphs toward unbiased communities.
As shown in Fig.~\ref{fig:intro}, recommending connections between users from distinct groups can effectively break structural barriers. Such fairness guidance via new links could foster unbiased communities in real-world scenarios. Consequently, GNN classifiers trained on the unbiased structures can yield fair outcomes for various downstream tasks.  Therefore, it is crucial to investigate the problem of guiding graphs toward fairness by introducing new links.

\begin{figure}[t]
    \centering
    \includegraphics[width=\linewidth]{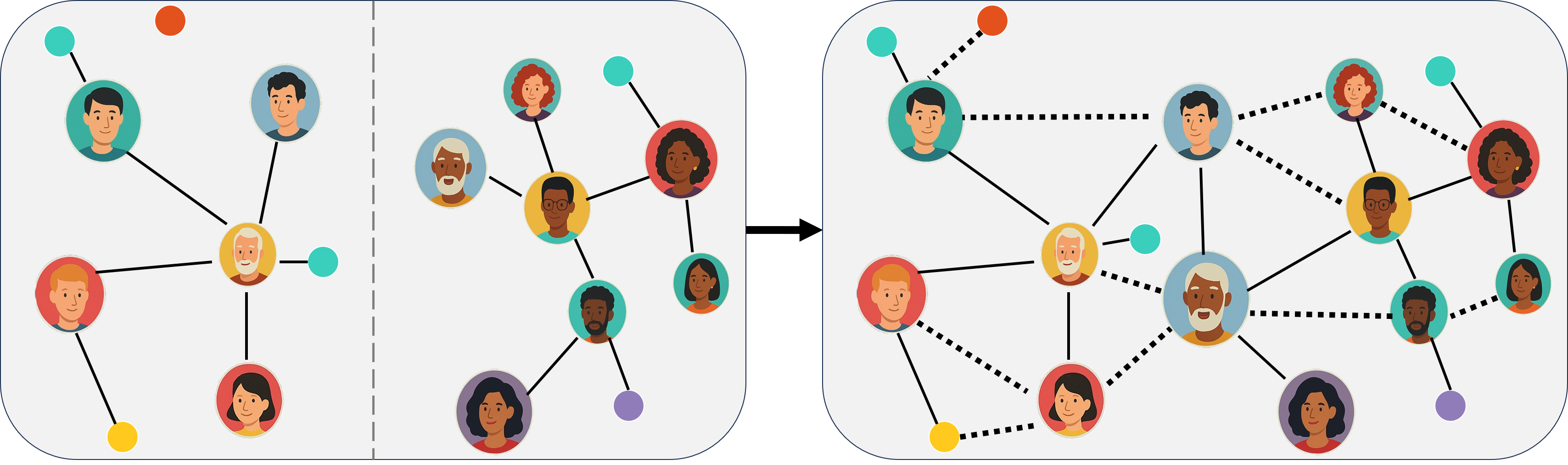} 
    \begin{subfigure}[b]{0.22\textwidth}
        \caption{Original Graph}
        \label{fig:biased}
    \end{subfigure}
    \hfill
    \begin{subfigure}[b]{0.22\textwidth}
        \caption{Fairness-Guided Graph}
        \label{fig:unbiased}
    \end{subfigure}
    \vspace{-1em}
    \caption{Illustration of guiding the fairness of graphs via new links. (a) An example of a biased social network with structural barriers between user subgroups; (b) Introduction of new links among subgroups to foster unbiased community.}
    \vspace{-1em}
  \label{fig:intro}
\end{figure}

Extensive research has been conducted to address fairness issues in graph neural networks. Specifically, in-processing ~\cite{buyl2020debayes,wang2022unbiased,zhao2022towards,zhu2024one,zhu2024devil,ling2023learning,luo2024fugnn} and post-processing~\cite{kang2020inform,masrour2020bursting,bose2019compositional} approaches primarily modify the training procedures or refine the output predictions of GNNs to mitigate algorithmic biases.  Additionally, pre-processing~\cite{dong2022edits,spinelli2021fairdrop,li2021dyadic} typically provide model-agnostic solutions by  modifying  the graph structures or node features to achieve fairness.
Despite these significant advances, they are generally not applicable to the problem of guiding user graphs toward fairness via new links.  \textit{Firstly}, in-processing and post-processing approaches primarily focus on obtaining GNN classifiers that are fair with respect to the given task ~\cite{dong2023fairness,zhu2024devil,wang2022unbiased}. Updating the graph students for fairness are not considered in these approaches.
\textit{Secondly}, though pre-processing approaches for fair GNNs~\cite{dong2022edits,li2021dyadic} also involve modifying the input graphs, these approaches typically impose no explicit constraints on structural modifications. As a result, these methods often result in link removals and excessive link suggestions for individual users. This hinder the application of pre-processing to guiding the real-world growth of user graphs toward fairness, as users are generally reluctant to remove existing connections or accept numerous new recommendations. Furthermore, these methods typically optimize structures for specific downstream tasks, making them unsuitable for fairness guidance without predefined tasks.

Therefore, we aim to design a framework to guide the fairness of user graphs via new links. However, this goal is non-trivial, where two main challenges remain to be addressed. (i) The fairness guidance aims to reduce the inherent structural biases by selectively suggesting new links. Therefore, guidance on graph growth toward fairness should be task-agnostic. It remains challenging to effectively guide user graphs toward fairness without knowledge of specific downstream tasks.
(ii) As discussed, users are unlikely to accept a large number of link suggestions. Thus, another crucial challenge is how to effectively guide graphs toward fairness by introducing only a limited number of new links.
To address these challenges, we propose a novel framework named {\method}. Specifically, to eliminate structural biases without specific downstream tasks, {\method} leverages community detection as a pseudo downstream task. Since labels in many downstream tasks are closely correlated with community structures, ensuring structural fairness for community detection intuitively benefits various downstream applications. This intuition is further supported by our theoretical analysis.
To effectively utilize the limited link-addition budget, {\method} employs a module to identify optimal new links for structural fairness. Specifically, an efficient meta-gradient computation method is deployed to approximate the impact of potential link additions on structural fairness. This enables recommending optimal links to guide the graph toward fairness.
In summary, our main contributions are:
\begin{itemize}[leftmargin=*, topsep=10pt]
    \item We focus on a novel problem of fairness guidance, which aims to guide the existing biased graph structures into fair community via introducing new links.
    \item We propose a novel fairness guidance framework named {\method}, which incorporates a pseudo downstream task and link addition through meta gradients to identify optimal new links for structural fairness.
    \item We collect a new large-scale social network from GitHub to provide empirical validation of {\method} in a real-world scenario.
    \item Both theoretical analysis and empirical results demonstrate the our {\method} can effectively guide graphs toward fairness to facilitate the fairness of downstream tasks.
\end{itemize}

\section{RELATED WORK}
In this section, we introduce the fair graph learning and link prediction methods that are closely related to our work.  
\subsection{Fair Graph learning}
Graph learning models have been widely adopted for analyzing topological data, demonstrating outstanding performance in various graph-based tasks \cite{kipf2016semi,zhang2018link,tsitsulin2023graph}. However, recent studies~\cite{kang2020inform,ma2022learning} reveal that fairness concerns emerge prominently in graph learning models. For instance, FairGNN~\cite{dai2021say} demonstrates that biases can implicitly propagate through graph structures, while  EDITS~\cite{dong2022edits} further demonstrates that biased graph topologies directly lead to discriminatory model outcomes. To address such challenges, numerous approaches have been proposed to improve fairness in graph learning. In-processing methods typically employ techniques like adversarial training~\cite{ling2023learning,11037524}, fairness-aware regularization~\cite{bose2019compositional,jiang2024chasing}, invariant learning~\cite{zhu2024one}  to reduce bias of specific models.  Some works also try to learn a fair graph representation by forcing distribution alignment~\cite{li2024graph} or preventing sensitive information leaking~\cite{zhu2024devil}. Additionally, pre-processing strategies modify the graph data itself, such as edge rewiring~\cite{spinelli2021fairdrop,li2021dyadic,wang2025fair} and feature masking~\cite{ling2023learning}, or whole data reconstruction~\cite{dong2022edits} to reduce the data distribution gap of different group and create fair input graphs for downstream tasks.And some data-oriented methods achieve fairness at a lower cost through data rebalancing~\cite{li2024rethinking}.

However, existing approaches face inherent limitations in solving the  problem of guiding the fairness of graphs. In-processing methods are limited to debiasing graph neural networks, making them inapplicable for addressing biases in the graph structures. Pre-processing methods aim to achieve fairness by modifying graph data directly but often involve impractical link removal and lack explicit constraints on the amount of structural modifications, which is not suitable for the real-world community. 
By contrast, our method explores to obtain an unbiased graph network by guiding the original graph data towards a fairer state, which remains under-explored for prior works.

\subsection{Link Prediction}
Standing as one of the most fundamental tasks in graph representation learning, Link prediction has been widely used in social network to help users find people that have not been connected~\cite{su2020link,daud2020applications}.  Existing approaches can be broadly categorized by their underlying techniques, including similarity-based methods~\cite{yu2017similarity,rossi2021knowledge}, dimensionality reduction-based methods~\cite{du2020cross} and so on. Among these, similarity-based methods are the most prevalent and can be further divided into two paradigms: structure-based methods ~\cite{aziz2020link,luo2021link} and attribute-aware methods ~\cite{xiao2021link,zhang2023iea}. Structure-based methods  rely on topological similarity metrics to infer potential links while Attribute-aware methods incorporate node features or semantic attributes to enhance prediction accuracy. However, such similarity-based approaches face inherent fairness-related limitations. Due to the inherent similarity in the same group, traditional link prediction techniques are difficult to break the bubbles in the community~\cite{masrour2020bursting} and may even enlarge the bias. Some works have investigated fairness-aware link prediction or recommendation systems to mitigate algorithmic bias and improve diversity in recommended connections~\cite{li2021dyadic,li2022fairlp}. However, these works primarily focus on addressing specific link prediction or recommendation tasks, rather than providing systematic frameworks to help guide a fair and diversity graph network. 
\section{PRELIMINARY ANALYSIS}

In this section, we present preliminaries of the fairness issues in the real-world user graph structures. 

\subsection{Notations} 
\label{sec:notations}

We use $\mathcal{G}=(\mathcal{V}, \mathcal{E}, \mathbf{X})$ to denote user graph where $\mathcal{V}=\{v_1,...,v_N\}$ is the set of $N$ nodes, representing the users in the network, $\mathcal{E} \in \mathcal{V} \times \mathcal{V}$ is the set of edges, and $\mathrm{\mathbf{X}} \in \mathbb{R}^{N \times M}$ is the node attribute matrix, and each node $v_i$ is associated with a $M$-dimensional feature vector ${\mathbf{x}_i}$.  $\mathrm{\mathbf{A}} \in \mathbb{R}^{N \times N}$ is the adjacency matrix of $\mathcal{G}$, where $\mathbf{A}_{ij}=1$ if nodes $v_i$ and $v_j$ are connected, otherwise $\mathbf{A}_{ij}=0$.  ${Y}$ and $\hat{{Y}}$ represent the ground truth and outcomes for the downstream task, respectively. In this work, we focus on binary sensitive attribute which is denoted as $s \in \{0,1\}$.

\subsection{Collection of Github Dataset}
For the purpose of this study, we constructed a real-world dataset by crawling from the social platform \textbf{Github}, which is the most popular open source platform in the world. This Github dataset contains more than 30,000 developers' profiles and 270,000 links between users. The features of developers include gender, region , followers, development language and etc. And the links stands for the following relationship between two developers.  Both features and links of the dataset is collected by Github REST API. We divide  users into \textit{developed} and \textit{developing} according to the countries they belong to and treat it as sensitive attribute. Then we further categorize a user with more than 35 followers as popular developer and treat it as the predicting target for the classification task.

\subsection{Preliminaries of Fairness}
\label{sec:fair_definition}
We focus on two widely used group fairness notions, i.e., statistical parity~\cite{dwork2012fairness} and equal opportunity~\cite{hardt2016equality}. And we consider a binary sensitive attribute, i.e., $s \in \{0,1\}$.

\begin{myDef}[Statistical Parity]
Statistical parity ensures that the predicted result $\hat{Y}$ is independent of the sensitive attribute $s$. Formally, for a binary classification task, i.e, $\hat{Y} \in \{0, 1\}$, the metric of statistical parity is computed by:
\begin{equation}
\begin{aligned}
 \Delta_{SP} = |P ( \hat{Y}=1| s=0 )-P ( \hat{Y}=1| s=1 )|.
\end{aligned} 
\label{eq5}
\end{equation}
For a multi-category task, i.e., $\hat{Y} \in \{0,1,...,C\}$ , $\Delta_{SP}$ extends to:
\begin{equation}
\begin{aligned}
\Delta_{SP} = \frac{1}{2} \sum_{C_k \in \mathcal{C}} \Big| P(\hat{Y} = C_k \mid s = 0) - P(\hat{Y} = C_k \mid s = 1) \Big|
\end{aligned} 
\label{eq5}
\end{equation}
A lower value of $\Delta_{SP}$ indicates fairer predictions with respect to the sensitive attribute.
\end{myDef}

\begin{myDef}[Equal Opportunity]  
Equal opportunity requires that the predicted result $\hat{Y}$ is conditionally independent of the sensitive attribute $s$ , given the true label ${Y}$. Considering a binary classification task, the metric of equal opportunity is formulated as:
\begin{equation}
\begin{aligned}
 \Delta_{EO} = | P ( \hat{Y}| s=0, Y=1 ) - P ( \hat{Y}| s=1, Y=1 ) |
\end{aligned} 
\label{eq5}
\end{equation}
Similar to $\Delta_{SP}$, lower $\Delta_{EO}$ implies fairer results of predictions. 
\end{myDef}

\subsection{Discrimination Analysis on Github Dataset}
\label{sec：dis_analysis}

To analyze the biases of the user graph structure, we adopt GCN~\cite{kipf2016semi} to perform node classification task and Louvain algorithm~\cite{blondel2008fast} to perform community detection task.
From Tab.~\ref{tab:preliminary}, we can observe that both $\Delta_{SP}$ and $\Delta_{EO}$ exhibit high values in the node classification and community detection tasks, indicating significant biases within the original graph structure.

Since we aim to guide the fairness of graphs via new links, we conduct preliminary analysis to investigate how simple link addition strategies will affect the performance and fairness on downstream tasks. Specifically, two link addition strategies are considered. (i) \textbf{Link Pred.}: It adds edges by a link predictor~\cite{schlichtkrull2018modeling} ; (ii) \textbf{Rand. Add}: It randomly adds edges to the Github graph;
Both strategies are constrained to add 4\% number of existing links. 
The results on node classification and community detection are presented in Tab.~\ref{tab:preliminary}, where we can observe that: 
\begin{itemize}[leftmargin=*]
    \item Adding edges via link prediction improves task performance but has limited effectiveness in mitigating bias. This is because connecting similar nodes primarily reinforces existing community structures, thus preserving underlying structural biases; 
    \item Adding some random links is slightly effective in mitigating bias. This is because these random edges can lead to inter-group connections.  However, the biases are still significant and the task performance degrade largely.
\end{itemize}
These observations indicate that existing link prediction methods are insufficient to address the fairness guidance problem.

\begin{table}[t]
\small
\centering
\caption{Discrimination of GCN in tasks of node classification and community detection (CD) on the Github dataset.}
\vspace{-1em}
\begin{tabular}{lrcccc} 
\toprule
Tasks & Metrics & Original & Link Pred. & Rand. Add \\ 
\midrule 
\multirow{4}{*}{\makecell[l]{Node \\ Classification}} & F1 (\%)  $\uparrow$ & 78.6 $\pm$ 0.2 & 78.8 $\pm$ 0.1  &    78.0 $\pm$ 0.1          \\
                         & AUC (\%) $\uparrow$ & 85.4 $\pm$ 0.5 & 85.4 $\pm$ 0.1  & 84.3 $\pm$ 0.1 \\
                          & $\Delta_{SP} (\%) \downarrow$  & 12.5 $\pm$ 0.4 & 11.9 $\pm$ 0.2 & 11.0 $\pm$ 0.2 \\
                          & $\Delta_{EO} (\%) \downarrow$  & 8.5 $\pm$ 0.5 & 8.3 $\pm$ 0.4 & 8.3 $\pm$ 0.3  \\
\midrule
CD
                          & $\Delta_{SP} (\%)\downarrow$  & 41.9 $\pm$ 1.5 & 38.2 $\pm$ 4.0 & 35.7 $\pm$ 2.8  \\

\bottomrule
\end{tabular}
\label{tab:preliminary}
\end{table}

\subsection{Problem Definition}

The analysis presented in Sec.~\ref{sec：dis_analysis} demonstrates the necessity of developing methods that guide graph structures toward fairness by introducing new links. With the notations in Sec.~\ref{sec:notations} and fairness definitions in Sec.~\ref{sec:fair_definition}, the fairness guidance via link addition can formulated as:
\begin{myPro}[Fairness Guidance via Link Addition] 
Given a user graph $\mathcal{G} = (\mathcal{V}, \mathcal{E}, \mathbf{X})$ with  sensitive attributes $\mathcal{S}$, our objective is to strategically add $n$ links to obtain a fair graph ${\mathcal{G}}' = (\mathcal{V}, \mathcal{E}', \mathbf{X})$.The number of added links, denoted as $n$, is subject to a predefined constraint  $\Delta$. A GNN model $f: {\mathcal{G}}' \rightarrow \hat{y}$, trained on the modified graph ${\mathcal{G}'}$, is required to produce predictions $\hat{y}$ that satisfy the fairness criteria introduced in Sec.~\ref{sec:fair_definition}.
\end{myPro}
\begin{figure}[t]  
  \centering
  \includegraphics[width=\linewidth]{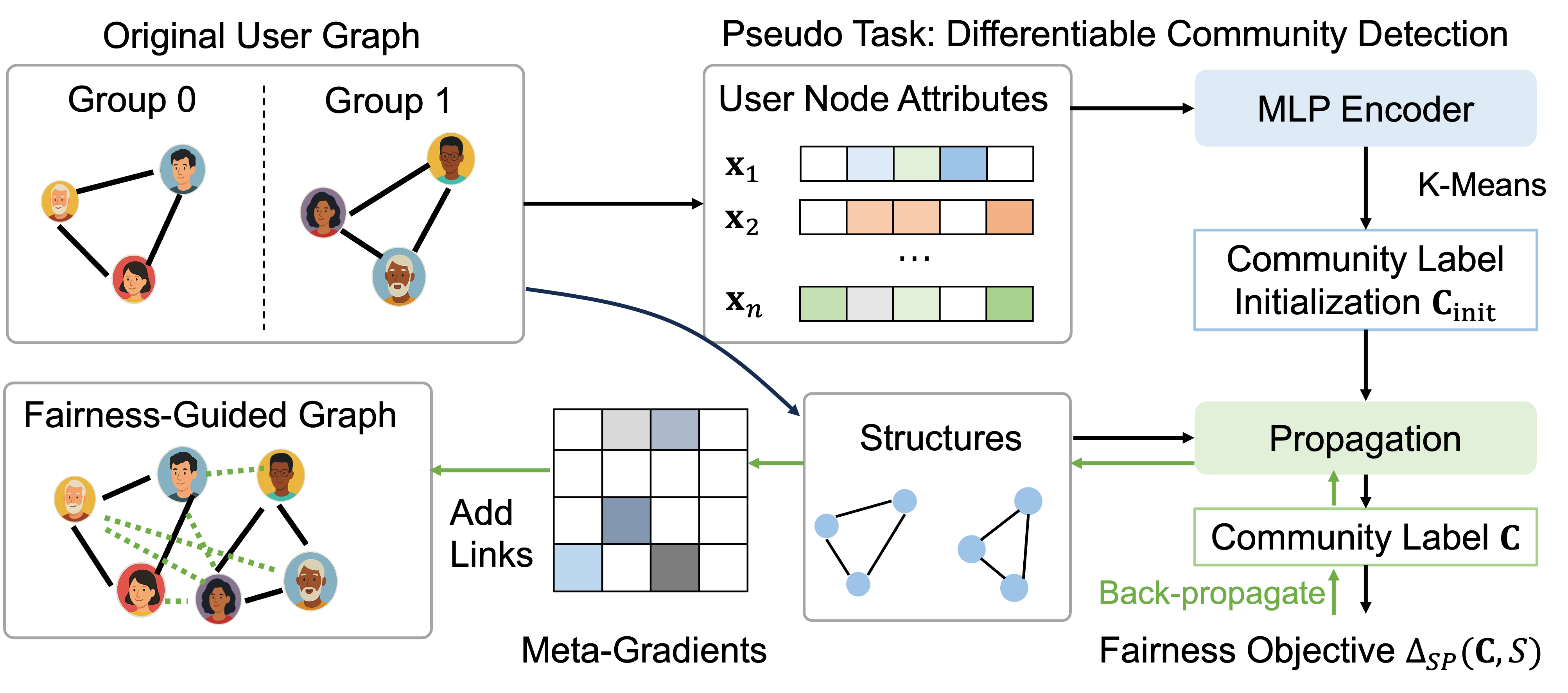} 
  \caption{An illustration of {\method} which add new links to guide the growing of graph structures toward fairness.}
  \label{fig:pipeline}
\end{figure}

\section{METHODOLOGY}

In this section, we present the details of {\method}. {\method} formulates fairness guidance via new links as an optimization problem, which selects new links that  minimize the structural biases. 
Two major challenges remain to be solved: (i) how to measure the biases of structures without accessing to the downstream tasks; (ii) how to add links to effectively guide the fairness of the user graph structures within a constrained budget for link addition. 
As illustrated in Fig.~\ref{fig:pipeline}, {\method} leverages community detection as a pseudo task to reflect the structural biases for downstream tasks. 
This pseudo downstream task serves as a proxy for downstream tasks, allowing us to generalize fairness optimization without relying on task-specific adjustments. 
To fully utilize the limited budget for link addition, {\method} employs meta-gradients derived from the fairness-guidance objective to identify new links that most effectively enhance structural fairness. Next, we introduce the bi-level optimization objective function of {\method} followed by details of each component.

\subsection{Overall Goal of {\method}}

The goal of {\method} is to add fairness-guiding links to $\mathcal{G}$ to  ensure the GNN classifier trained on the updated graph structure $\mathcal{G}'$ produces fair predictions.
Let $\mathcal{Y}_t$ and $\mathcal{L}_t$ denote the labels and the training loss for the target downstream task, respectively.
Fairness guidance via the introduction of new links can be formulated as the following bi-level optimization on the structures:  
\begin{equation}
\begin{aligned}
    \min_{{\mathcal{G}'}} &  \mathcal{M}(f_{\theta^*}(\mathcal{G}'),  \mathcal{S}, \mathcal{Y}_t) \\
    \text{s.t.}  \quad & \theta^*  = \arg\min_\theta \mathcal{L}_{{t}}(f_\theta(\mathcal{G}'), \mathcal{Y}_t) \\
    & {\mathcal{E}'} \supseteq \mathcal{E}~~\text{and} ~~ {\mathcal{E}'}-\mathcal{E} \leq \Delta 
\end{aligned}
\label{eq:overall}
\end{equation}
where $\mathcal{M}$ measures the fairness of $f_{\theta^*}$ on the updated graph structures $\mathcal{G}'$ that have been added links. $\mathcal{L}_t$ denotes the training loss of the downstream GNN classifier. The constraint $\mathcal{E}'-\mathcal{E} \leq \Delta$ limits the number of added links to at most $\Delta$. In the inner-level optimization, it simulates the training of the GNN classifier on the user graph for the target downstream task. In the outer-level optimization,  the optimal set of new links are identified to promote fairness in the downstream GNN classifier.

\subsection{Pseudo Downstream Task}

With the Eq.(\ref{eq:overall}), the fairness guidance via link addition is formatted to a bi-level optimization on the graph structures under constraints. However, the Eq.(\ref{eq:overall}) requires to measure the fairness based on GNN predictions on downstream tasks, which are unavailable during the fairness guidance process. 
To address this problem, {\method} deploys a pseudo task to reflect the fairness on the downstream task. Specifically, user communities naturally capture structural and node attribute information, and community labels often correlate strongly with labels from various downstream tasks. Thus, discrimination observed in an unsupervised community detection task can reflect structural biases relevant to downstream tasks. This intuition is further verified by the theoretical analysis. Therefore, community detection is deployed as the pseudo downstream task in {\method} to estimate structural biases without the specific knowledge of downstream tasks. Next, we provide theoretical justification of adopting the pseudo downstream task for fairness on downstream tasks. Then, we present the updated objective function, followed by the design of community detection as the pseudo task.

\vspace{0.1em}
\noindent \textbf{Theoretical Justification}. 
In the following, we present a theorem justifies our motivation that alleviating the bias of downstream tasks can be achieved by reducing the correlation coefficient between the sensitive attribute $\mathcal{S}$ and the community label $\mathbf{C}$. Below, we first present the definition of the Pearson correlation coefficient followed by the theorem and proof.

\begin{myDef}

(Pearson Correlation Coefficient). Pearson correlation coefficient measures the linear correlation between two random variables $X$ and $Y$ as:

\begin{equation}
\begin{aligned}
\rho_{X, Y} = \frac{\mathbb{E}[(X - \mu_X) \cdot (Y - \mu_Y)]}{\sigma_X \cdot \sigma_Y}
\end{aligned} 
\label{eq5}
\end{equation}

\end{myDef}

\begin{theorem}
Let \( C \) and \( S \) represent the community label and sensitive attribute, respectively. Let \( \hat{Y} \) denote the output of a downstream prediction task. Assume that \( C \) is highly correlated with \( \hat{Y} \), i.e., \( \rho_{C, \hat{Y}} \) is larger than a positive constant \( \cos\alpha \). If the model is trained to make \( \rho_{S, C} \) close to zero, i.e., within 
\[
\left[\cos\left(\frac{\pi}{2} + \delta\right), \cos\left(\frac{\pi}{2} - \delta\right)\right],
\]
where \( \delta \) is close to 0, then the correlation between the sensitive attribute \( S \) and the downstream prediction \( \hat{Y} \) satisfies:
\[
\rho_{S, \hat{Y}} \in \left[\cos\left(\frac{\pi}{2} + \delta + \alpha\right), \cos\left(\frac{\pi}{2} - \delta - \alpha\right)\right].
\]

\end{theorem}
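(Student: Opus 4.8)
The plan is to interpret Pearson correlation geometrically: for zero-mean, unit-variance versions of the random variables $C$, $S$, $\hat{Y}$, the correlation $\rho_{X,Y}$ equals the cosine of the ``angle'' between $X$ and $Y$ in the $L^2$ inner-product space of random variables. Concretely, I would set $\tilde{X} = (X-\mu_X)/\sigma_X$ for each variable, so that $\langle \tilde{X}, \tilde{Y}\rangle = \mathbb{E}[\tilde{X}\tilde{Y}] = \rho_{X,Y}$ and $\|\tilde{X}\| = 1$. Then define angles $\angle(\tilde{X},\tilde{Y}) = \arccos(\rho_{X,Y}) \in [0,\pi]$. The hypotheses translate to: $\angle(\tilde{C},\tilde{Y}) \le \alpha$ (since $\rho_{C,\hat{Y}} \ge \cos\alpha$ and $\arccos$ is decreasing), and $\angle(\tilde{S},\tilde{C}) \in [\tfrac{\pi}{2}-\delta, \tfrac{\pi}{2}+\delta]$.

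The key step is a triangle inequality for angles between unit vectors: $\angle(\tilde{S},\tilde{Y}) \le \angle(\tilde{S},\tilde{C}) + \angle(\tilde{C},\tilde{Y})$, and similarly a reverse bound $\angle(\tilde{S},\tilde{Y}) \ge |\angle(\tilde{S},\tilde{C}) - \angle(\tilde{C},\tilde{Y})|$. This is the standard spherical triangle inequality (the angular/geodesic metric on the unit sphere of a Hilbert space satisfies the triangle inequality). Applying it gives $\angle(\tilde{S},\tilde{Y}) \in [\tfrac{\pi}{2}-\delta-\alpha,\ \tfrac{\pi}{2}+\delta+\alpha]$ after combining the upper bound $\angle(\tilde{S},\tilde{C})+\angle(\tilde{C},\tilde{Y}) \le \tfrac{\pi}{2}+\delta+\alpha$ with the lower bound $\angle(\tilde{S},\tilde{C})-\angle(\tilde{C},\tilde{Y}) \ge \tfrac{\pi}{2}-\delta-\alpha$ (using that $\angle(\tilde{C},\tilde{Y})\le\alpha$). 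Finally, since $\cos$ is monotonically decreasing on $[0,\pi]$, applying $\cos$ to the angle interval reverses it and yields exactly $\rho_{S,\hat{Y}} = \cos\angle(\tilde{S},\tilde{Y}) \in [\cos(\tfrac{\pi}{2}+\delta+\alpha),\ \cos(\tfrac{\pi}{2}-\delta-\alpha)]$.

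I expect the main obstacle to be stating the angular triangle inequality cleanly and making sure the endpoints of the resulting interval land where claimed — in particular, one must keep track of the fact that the lower endpoint is obtained from $\angle(\tilde{S},\tilde{C})$ being as small as $\tfrac{\pi}{2}-\delta$ while $\angle(\tilde{C},\tilde{Y})$ can be as large as $\alpha$, and whether $\tfrac{\pi}{2}+\delta+\alpha$ stays within $[0,\pi]$ for the cosine-monotonicity argument to apply verbatim (this needs $\delta+\alpha \le \tfrac{\pi}{2}$, which is implicit since $\delta$ is near $0$ and $\cos\alpha$ is a positive constant so $\alpha < \tfrac{\pi}{2}$). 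A secondary subtlety is justifying that $C$ (a community label) and $\hat{Y}$ can be treated as real-valued random variables for which Pearson correlation is meaningful; I would simply adopt the paper's implicit convention that these are encoded numerically, and note that the argument only uses the Hilbert-space structure of $L^2$, not any discreteness. Everything else is routine substitution of the hypotheses into the inequality.
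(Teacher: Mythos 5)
Your proposal is correct and follows essentially the same route as the paper: identify Pearson correlation with the cosine of the angle between z-scored variables in $L^2$ (the paper's Lemma~2), then bound $\angle(\tilde S,\tilde Y)$ via the angular triangle inequality on the unit sphere, which is precisely what the paper's Lemma~1 establishes through the spherical law of cosines. Your added caveats (handling $\rho_{C,\hat Y}\ge\cos\alpha$ as an inequality rather than an equality, and checking $\delta+\alpha\le\pi/2$ so that cosine monotonicity applies) are in fact slightly more careful than the paper's own write-up.
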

This theorem demonstrates that when bias mitigation in pseudo-community detection is achieved through link addition, the upper bound of bias in downstream tasks is consequently constrained.  This theorem validates the deployment of community detection as the pseudo task to facilitate the structural fairness without the knowledge of downstream tasks.

\noindent \textbf{Updated Objective Function with Pseudo Downstream Task}. 
By adopting community detection as the pseudo task, the inner-level optimization is updated to the process of community detection. To measure the bias of the community labels $\mathbf{C}$, we adopt $\Delta_{SP}$ based on the predicted community assignments. Thus, the optimization problem in Eq.(\ref{eq:overall}) can be reformulated as:
\begin{equation}
\begin{aligned}
& \min_{\mathcal{G}'}  \Delta_{SP}({\mathbf{C}}, \mathcal{S}) \\
\text{s.t.} & \quad {\mathbf{C}} = \text{CommunityDetection}(\mathcal{G}'), \\
& \quad \mathcal{E}' \supseteq \mathcal{E}, \quad \mathcal{E}' - \mathcal{E} \leq \Delta.
\end{aligned}
\label{eq:updated}
\end{equation}

\noindent \textbf{Efficient Differentiable Community Detection}. Despite the advantages of using community detection as a pseudo task, traditional community detection methods are typically non-differentiable with respect to the graph structure. 
One may utilize GNN-based community detection $\mathbf{C} = f_{\theta'}(\mathcal{G}')$ with $\theta' = \arg \min_{\theta} \mathcal{L}_{CD}(f_{\theta}(\mathcal{G}'))$, where $\mathcal{L}_{CD}$ is the loss for community detection~\cite{sun2021graph}. However, in this situation, computing the gradient with respect to the graph structure is computationally expensive, as it involves differentiating through the optimization process:
\begin{equation}
    \frac{\partial \mathbf{C}}{\partial \mathcal{G}'} = \frac{\partial f_{\theta'}(\mathcal{G}')}{\partial \mathcal{G}'} + \frac{\partial f_{\theta'}(\mathcal{G}')}{\partial \theta'}\frac{\partial \theta'}{\partial \mathcal{G}'}
    \label{eq:normalCD}
\end{equation}
where the latter term $\frac{\partial \theta'}{\partial \mathcal{G}'}$ requires differentiating through the iterative training procedure of the GNN. 
To overcome this issue, we decouple the community detection into attribute-based clustering and structure-based aggregation.
First, we utilize an MLP-based self-supervised auto-encoder to obtain latent feature representations for nodes. These representations are then clustered via K-means to generate initial community labels:
\begin{equation}
    \mathbf{C}_{\text{init}} = \text{K-means}(\text{MLP}(\mathbf{X})).
    \label{eq:comm_init}
\end{equation}
The number of communities $C$ is the predefined hyperparameter. 

Inspired by label propagation~\cite{zarezadeh2022dpnlp,gasteiger_predict_2019}, we propose to aggregate the initial community labels by graph structure, enabling the incorporation of structural information.  Specifically, the aggregation process for final community labels is described as:
\begin{equation}
\mathbf{C} = \mathrm{softmax}\left( (1-\alpha)^K \hat{\mathbf{A}}^K \mathbf{C}_{\text{init}} + \alpha \sum_{i=0}^{K-1} (1-\alpha)^i \hat{\mathbf{A}}^i \mathbf{C}_{\text{init}} \right),
\label{eq:comm_label}
\end{equation}
where $\hat{A}$ is the symmetric normalized adjacency matrix and $\alpha$ is the probability of restarting from the original node features in aggregation. After $K$-layer aggregation, we get the final community labels. Since label propagation with structures is decoupled with the MLP-based community label initialization, the computation of $\frac{\partial \mathbf{C}}{\mathbf{A}}$ could be  computed without differentiating through the MLP parameters, resulting in much more efficient gradient computation compared to Eq.(\ref{eq:normalCD}).

\subsection{Fairness-Guided Link Addition via Meta Gradients and Gumbel-max Sampling}

In this subsection, we present how to optimize the updated objective function Eq.(\ref{eq:updated}) given the differentiable community detection. Following previous works in updating structures ~\cite{zugner_adversarial_2019}, we solve the bi-level optimization problem in Eq.(\ref{eq:updated}) using meta-gradients. Next, we first derive the computation of meta-gradients. Then, we present a strategy of link addition with meta-gradients that satisfies constraints on link discreteness and the number of new links.

\noindent \textbf{Meta-Gradients of Candidate Links}. The meta-gradient of a candidate link indicates how the addition of this link would influence the fairness of the pseudo community detection task. 
In the following, we formally illustrate the computation of the meta-gradient using the differentiable community detection method:

\begin{equation}
    \nabla_{\mathbf{ A}}^{\text{meta}} = \frac{\partial \Delta_{SP}(\mathbf{C},\mathcal{S})}{\partial \mathbf{C}} \left( \frac{\partial \mathbf{C}}{\partial {\mathbf{A}}} +  \frac{\partial \mathbf{C}}{\partial \mathbf{C}_{\text{init}}} \frac{\partial \mathbf{C}_{\text{init}}}{\partial {\mathbf{A}}} \right)
\end{equation}
Here, the first term represents the direct influence of the perturbed graph structure \(\hat{\mathbf{A}}\) on the final model output, while the second term accounts for the indirect effects propagated through initialized community labels. Note that initialized community labels are independent to the graph structure. Therefore, we can deduce that the indirect term $\frac{\partial \mathbf{C}_{\text{init}}}{\partial \mathbf{A}} = 0$. So finally the meta-gradient consequently be simplified as: 

\begin{equation}
    \nabla_{\mathbf{ A}}^{\text{meta}} = \frac{\partial \Delta_{SP}(\mathbf{C},\mathcal{S})}{\partial \mathbf{C}} \frac{\partial \mathbf{C}}{\partial {\mathbf{A}}}.
    \label{eq:meta}
\end{equation}
With the Eq.(\ref{eq:meta}), the meta-gradients of candidate links can be efficiently computed. 
Next,  we'll introduce how we optimize the adjacent matrix \({\mathbf{A}}\) discretely to satisfy the constraints of our optimization problem .

\noindent \textbf{Gumbel-max Sampling for Link Addition.} With the Eq.(\ref{eq:meta}), we get  meta-gradients of graph structures, i.e., $\nabla_{{\mathbf{A}}}^{\text{meta}}$.  And based on meta-gradient, a straightforward way to optimize the graph structure is:
\begin{align}
 \mathbf{{A}^\prime} = \mathbf{A} - \alpha 
 \nabla_{{\mathbf{A}}}^{\text{meta}} .
\end{align}
However, this approach cannot guarantee the constraints of link discreteness and the maximum number of link addition. To overcome these issues, we reformulate edges adding process as a stochastic sampling process based on reweighted meta-gradients. According to the rule for the derivative of a scalar with respect to a matrix, the final gradient matrix $\nabla_{{\mathbf{A}}}^{\text{meta}}$ has the same shape as the adjacency matrix $\mathbf{A}$ , and $\nabla_{{\mathbf{A_{i,j}}}}^{\text{meta}}$ represents the possible effect of the edge i,j on fairness. Intuitively, to grow an unbiased community, more group-cross edges are needed to add, so we amplify gradients for edges connecting nodes with differing sensitive attributes while enabling probabilistic batch selection. Specifically, for each candidate edge $(i, j) \in \mathcal{E} = \left\{ (u, v) \mid \mathbf{A}_{u,v} = 0 \right\}$, we compute an adjusted gradient score:
\begin{equation}
\tilde{\nabla}{i,j} = -{\nabla_{{\mathbf{A_{i,j}}}}^{\text{meta}}} \cdot {(1 + \beta \cdot \mathbb{I}(s_i \neq s_j))},
\end{equation}
Here $\beta$ controls inter-group connection incentives, and taking a negative value of the gradient means potential edges with smaller gradients have higher scores. We then sample edges via Gumbel-softmax reparameterization:
\begin{equation}
P(i,j) = \frac{\log(\tilde{\nabla}{i,j} + \epsilon) + g_{i,j}}{\tau}, \quad g_{i,j} \sim \text{Gumbel}(0,1),
\end{equation}
where $\ g_{i,j}$ is the gumbel noise, $\tau$ modulates exploration-exploitation trade-offs and $\epsilon$ prevents numerical underflow. And simultaneously the top-k edges with highest scores are added :

\begin{equation}
{\mathbf{A}}^\prime = \mathbf{A} + \sum_{(i,j)\in \text{Top}k({P(i,j)})} \delta_{i,j}.
\end{equation}
These formulas describe the whole meta-gradient optimization process for dynamically updating the graph structure to optimize fairness. After one round of the process of sampling edges to update the graph structure, the whole optimization pipeline goes back to aggregation step. Repeating the cycle for several times, we get the final structure ${\mathbf{A}}^\prime$ with new fair links. The detailed algorithm of {\method} can be found in Appendix~\ref{app:algorithm}.

\subsection{Time Complexity Analysis}
In this section, we give the time complexity for adding a link for a specific node $v$. Specifically, the computational complexity of {\method} is determined by two components, i.e., differentiable community detection, and fairness-guided link addition via meta
gradients. To recommend a link for node $v$, the differentiable community detection involve the existing links and candidate links in the differential aggregation phase. The time complexity would be $O((d+1)c|\mathcal{V}|)$, where $d$ and $c$ denotes the average degree of graph and predefined community number. According to Eq.(\ref{eq:meta}), the cost of meta-gradient computation is the same as the forward computation $O((d+1)c|\mathcal{V}|)$. Finally, the selection of optimal link would be $O(|\mathcal{V}|\log{|\mathcal{V}|})$.   
Therefore, for a given node $v$, the total time complexity of suggesting a link to add would be $O(2(d+1)c|\mathcal{V}|+|\mathcal{V}|\log{|\mathcal{V}|})$. In comparison, a GNN-based link recommendation has the time complexity as $O(dh|\mathcal{V}|+|\mathcal{V}|\log{|\mathcal{V}|})$, where $h$ indicates the hidden dimension. Thus, the computational cost of {\method} is similar to standard link recommendation methods. Additionally, the actual running time of adding a single link is reported in the Tab.~\ref{tab:Running_time} in the Appendix, which is less than 0.02 seconds.

\section{Experiments}

\definecolor{lightgray}{gray}{0.85}
\newcommand{\secondbest}[1]{\cellcolor{lightgray}#1}
\newcommand{\best}[1]{\textbf{#1}}

In this section, we conduct experiments on real-world user graphs to answer the following research questions:
\begin{itemize}[leftmargin=*]
    \item \textbf{RQ1}:Is {\method} capable of adding new links that can effectively promote fairness in user graph structures?
    \item \textbf{RQ2}: How does the number of new links introduced for fairness guidance influence the fairness of GNN-based applications?
    \item \textbf{RQ3}: How does the pseudo downstream task and dynamic link addition by meta-gradients contribute to {\method} in guiding the fairness of graph structures ?
\end{itemize}

\subsection{Experimental Settings}
\label{sec:set_up}

\subsubsection{Datasets}
Three real-world include Pokec-n, Pokec-z and Github are adopted for our experiments. The Github dataset is introduced in Sec. 3.2. The Pokec-z/n datasets~\cite{dai2021say} are both sampled from Pokec, which is the most popular social network in Slovakia. Node features include gender, age, hobbies, regions, and other attributes. Edges in the dataset represent friendship relationships among users. The statistics of three datasets are in the table~\ref{tab:dataset_stats}. We set the labels larger than 1 as 1 and select 25\% of labeled nodes as the validation set and 25\%  labeled nodes as the test set. We also select 7000 ground-truth labels as training set for Github dataset and 500 labels for Pokec datasets following the prior works.

\subsubsection{Baselines} Since our proposed framework guides graph fairness via link addition, we compare it with three pre-processing methods that could debias the graph structures via injecting new links. Additionally, we include two simple link-addition strategies, i.e., {Rand. Add} and {Link Pred.}, for comparisons:

\begin{itemize}[leftmargin=*]
    \item \textbf{Rand. Add}: This baseline randomly add links to the graph.
    \item \textbf{Link Pred.} ~\cite{schlichtkrull2018modeling}: It trains a GCN-based graph autoencoder. Then,  pairwise cosine similarity between node embeddings are used for the link prediction. 
    \item \textbf{EDITS} ~\cite{dong2022edits}: A preprocessing framework designed to mitigate bias in attributed networks by optimizing bias metrics across both attribute and structural modalities. We adopt its structure debiasing part and restrict it to only add edges according to the trained edge score.
    \item \textbf{Fairgen} ~\cite{zheng2024fairgen}: A generative model that promotes fairness in graph generation by incorporating label information and fairness constraints, thereby reducing representation disparity. We sample new edges in the generated graph structure and combine them with the original graph structure.
    \item \textbf{Graphair} ~\cite{ling2023learning}: A method for learning fair graph representations through the automatic discovery of fairness-aware augmentations.We choose its structure augmentation part and ensure the preservation of original edges in the augmenting process. 
\end{itemize}

\begin{table}[t]
  \centering
  \caption{The statistics of the three real-world datasets. }
  \vskip -0.5em
  \label{tab:dataset_stats}
  \small 
  \begin{tabular}{lcccl}
    \toprule
    {Dataset} & {\# Nodes} & {\# Edges} & {Sensitive attribute} & {Label} \\
    \midrule
    Github      & 32,132  & 270,088  & Country   & Popularity \\
    Pokec-z     & 67,797  & 882,765 & Region & Job Field \\
    Pokec-n     & 66,569  & 729,129 & Region & Job Field \\
    \bottomrule
  \end{tabular}
  \vskip -1em
\end{table}

\subsubsection{Implementation Details} {\method} is implemented using Pytorch and optimized via Adam optimizer ~\cite{kingma2014adam}. Each iteration we fix the adding number as 100.
For each method, we conduct experiments with seed \{10,20,30,40,50\} and compute the mean and standard deviations of F1, AUC, $\Delta_{SP}$ and $\Delta_{EO}$. All methods consistently add the same number of new edges, corresponding to 3\% of the original graph's total edge number for Pokec-n, 1.5\% for Pokec-z and 4\% for Github.  For GNN backbone, we employ 2-layer GCN, 2-layer GraphSage and 10-layer APPNP. The hidden dimension is set as 128,  the learning rate and training epochs are set as $1 \times 10^{-3}$ and 1000 respectively.
\begin{table*}[t]
\centering
\caption{Results for the node classification on GCN model, with best results in bold and runner-up results in gray.}
\label{tab:node}
\vskip -0.5em 
\begin{tabularx}{0.95\linewidth}{ll | CCC | CCCC}
\toprule

{Dataset} & {Metrics (\%)} & {Vanilla} &  {Rand. Add} & {Link Pred.}  &{Edits} & {Graphair} & {Fairgen} & {\method} \\ 
\midrule

\multirow{4}{*}{Github} & F1 (\textbf{↑})   & 78.6 $\pm$ 0.2  & 78.0 $\pm$ 0.1 & 78.8 $\pm$ 0.1 & \textbf{78.6 $\pm$ 0.1}  &  77.5 $\pm$ 0.2 & 77.5 $\pm$ 0.2 & \secondbest {77.8 $\pm$ 0.1}  \\
                         & AUC (\textbf{↑})  & 85.4 $\pm$ 0.5 & 84.3 $\pm$ 0.1  & 85.4 $\pm$ 0.1 & \textbf{84.8 $\pm$ 0.3} & 84.2 $\pm$ 0.1 & 84.0 $\pm$ 0.1  &  \secondbest {84.3 $\pm$ 0.1} \\
                         & $\Delta_{SP}$ (\textbf{↓}) & 12.5 $\pm$ 0.4 & 11.0 $\pm$ 0.2 & 11.9 $\pm$ 0.2 & 11.5 $\pm$ 0.8  & 11.1 $\pm$ 0.3 & \secondbest{10.8 $\pm$ 0.3} & \textbf{\phantom{0}8.6 $\pm$ 0.2} \\
                         & $\Delta_{EO}$ (\textbf{↓}) & \phantom{0}8.5 $\pm$ 0.5 & \phantom{0}8.3 $\pm$ 0.3 & \phantom{0}8.3 $\pm$ 0.4 & 8.5 $\pm$ 0.8 & \phantom{0}8.4 $\pm$ 0.3 & \phantom{0}\secondbest{8.3 $\pm$ 0.1} & \textbf{\phantom{0}6.0 $\pm$ 0.2} \\
\midrule

\multirow{4}{*}{pokec-n} & F1 (\textbf{↑})   &  66.8 $\pm$ 0.7 & 67.1 $\pm$ 0.3 & 66.7 $\pm$ 0.5 & 65.4 $\pm$ 0.4 & \secondbest{65.4 $\pm$ 0.3} & 65.2 $\pm$ 0.3  & \textbf{66.2 $\pm$ 0.3} \\
                         & AUC (\textbf{↑})  & 75.4 $\pm$ 0.1 & 74.8 $\pm$ 0.1 & 75.4 $\pm$ 0.2 & 72.8 $\pm$ 0.6  & \secondbest{74.3 $\pm$ 0.1} & 74.2 $\pm$ 0.2 & \textbf{74.9 $\pm$ 0.2}  \\
                         & $\Delta_{SP}$ (\textbf{↓})  & \phantom{0}8.5 $\pm$ 0.7 & \phantom{0}9.2 $\pm$ 1.0 & \phantom{0}7.9 $\pm$ 0.8 & \phantom{0}\secondbest{3.7 $\pm$ 1.3} & \phantom{0}9.7 $\pm$ 0.4 & \phantom{0}9.6 $\pm$ 0.7  & \textbf{\phantom{0}1.3 $\pm$ 0.8} \\
                         & $\Delta_{EO}$ (\textbf{↓})  & 11.9 $\pm$ 1.4 & \phantom{0}12.3 $\pm$ 1.0 & 11.0 $\pm$ 1.6 & \phantom{0}\secondbest{6.3 $\pm$ 1.5} & \phantom{0}11.7 $\pm$ 1.6 & \phantom{0}11.7 $\pm$ 0.5  & \textbf{\phantom{0}3.1 $\pm$ 0.5} \\
\midrule

\multirow{4}{*}{pokec-z} & F1 (\textbf{↑})   & 70.6 $\pm$ 0.4 & 70.5 $\pm$ 0.2 & 70.3 $\pm$ 0.6 & 67.1 $\pm$ 1.1  & 69.0 $\pm$ 0.5 & \secondbest{69.1 $\pm$ 0.9} & \textbf{70.2 $\pm$ 0.4}\\
                         & AUC (\textbf{↑})  & 77.2 $\pm$ 0.1 & 76.8 $\pm$ 0.1 & 77.2 $\pm$ 0.1 & 75.5 $\pm$ 0.7 & \secondbest{76.0 $\pm$ 0.1} & 76.0 $\pm$ 0.3 & \textbf{76.1 $\pm$ 0.1} \\
                         & $\Delta_{SP}$ (\textbf{↓}) & \phantom{0}9.1 $\pm$ 0.9 & \phantom{0}8.8 $\pm$ 0.8 & \phantom{0}9.5 $\pm$ 0.8 & \phantom{0}\secondbest{5.0 $\pm$ 2.0} & \phantom{0}6.4 $\pm$ 1.5 & \phantom{0}6.9 $\pm$ 2.4  & \textbf{\phantom{0}3.1 $\pm$ 0.7} \\
                         & $\Delta_{EO}$ (\textbf{↓}) & \phantom{0}8.2 $\pm$ 1.2 & \phantom{0}8.0 $\pm$ 0.7 & \phantom{0}8.1 $\pm$ 1.1  & \phantom{0}\secondbest{5.1 $\pm$ 1.8} & \phantom{0}5.5 $\pm$ 2.0 &  \phantom{0}7.6 $\pm$ 2.6  &  \textbf{\phantom{0}4.6 $\pm$ 0.4}\\
\bottomrule
\end{tabularx}
\end{table*}

\begin{table*}[t!]
\centering
\caption{Results on the downstream community detection task, with best results in bold and runner-up results in gray.}
\vspace{-1em}
\begin{tabularx}{0.95\linewidth}{ll|CCC|CCCC}
\toprule
{Dataset} & {Metrics (\%)} & {Vanilla} & {Rand. Add}  & {Link Pred.} & {Edits} & {Graphair} & {Fairgen} & {\method} \\ 
\midrule

\multirow{1}{*}{Github} 
                         & $\Delta_{SP}$ (\textbf{↓}) & 41.9 $\pm$ 1.5 & 35.7 $\pm$ 2.8 &
                         38.2 $\pm$ 4.0 & 40.2 $\pm$ 3.6 & \secondbest{31.9 $\pm$ 2.1} & 36.3 $\pm$ 3.8  & \textbf{29.0 $\pm$ 2.2} \\

\multirow{1}{*}{pokec-n} 
                         & $\Delta_{SP}$ (\textbf{↓}) & 83.2 $\pm$ 3.0 &  76.1 $\pm$ 0.8 & 77.5 $\pm$ 2.6 & 79.6 $\pm$ 3.3  & \secondbest{79.2 $\pm$ 2.9} & 80.0 $\pm$ 1.4 & \textbf{74.1 $\pm $ 3.4}\\
                         
\multirow{1}{*}{pokec-z} 
                         & $\Delta_{SP}$ (\textbf{↓}) & 77.7 $\pm$ 0.9 & 79.3 $\pm$ 3.7 & 80.4 $\pm$ 3.0 & 82.3 $\pm$ 3.8 & 81.7 $\pm$ 1.7 & \secondbest{79.6 $\pm $ 0.7} & \textbf{70.1 $\pm$ 4.0} \\
\bottomrule
\end{tabularx}
\label{tab:community}
\end{table*}

\subsection{Results of Fairness Guidance}
To answer \textbf{RQ1}, we first demonstrate that the links suggested by {\method} effectively improve the fairness of various GNN models across different downstream tasks. Then, we discuss the trade-off between fairness and utility in user graphs.

\noindent \textbf{Fairness Improvements on Downstream Tasks}. 
Two types of downstream tasks, i.e., node classification and community detection, are utilized to evaluate the effectiveness of {\method} in guiding the user graph structures toward fairness. 
For node classification task, a GCN is trained on these updated graph structures as the downstream classifier. For community detection task, we adopt the Louvain method ~\cite{blondel2008fast} based on modularity optimization.  We utilize the  $\Delta_{SP}$ and $\Delta_{EO}$ as metrics to evaluate the fairness of the GCN-based downstream classifiers. Fairer results in downstream tasks indicate a better performance in guiding the graph structure towards fairness. The results on two downstream tasks are presented in Tab.~\ref{tab:node} and Tab.~\ref{tab:community}. From these tables, we observe that:

\begin{itemize}[leftmargin=*] 
    \item  For the node classification task,  all baselines perform poor in fairness metric. In contrast, models trained on graphs with new links added by {\method} achieves great performance in fairness  and only sacrifices little task performance. 
    \item For the community detection task, the Rand. Add baseline shows limited effectiveness in enhancing fairness. Furthermore, links added by some baseline methods even amplify biases between communities. By contrast, fairness-guided links added by {\method} could bring significant improvements in community detection fairness compared with baselines.
\end{itemize}
\begin{figure}[t]
  \centering
  \begin{subfigure}{0.48\linewidth}  
    \includegraphics[width=\textwidth]{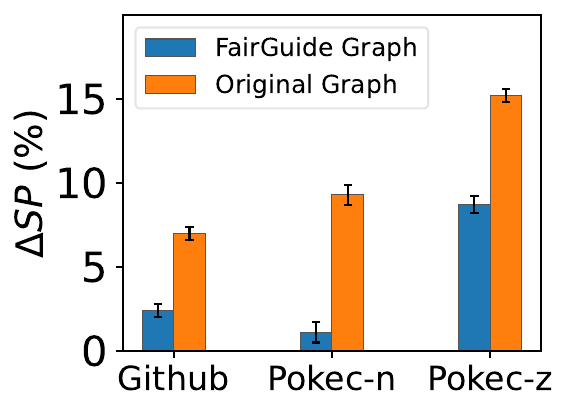}  
    \vspace{-0.5em}
    \caption{APPNP}
    \label{fig:abl_sp}
  \end{subfigure}
  \begin{subfigure}{0.48\linewidth}
    \includegraphics[width=\textwidth]{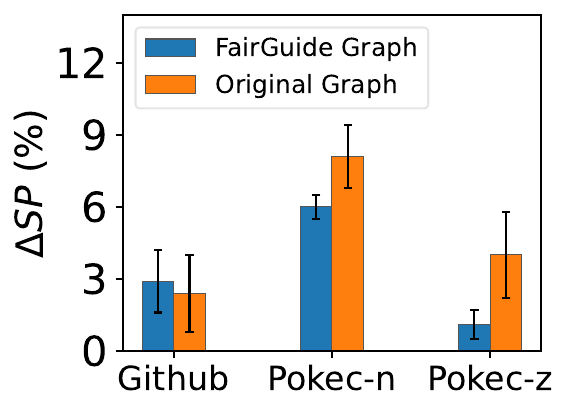} 
    \vspace{-0.5em}
    \caption{GraphSage}
    \label{fig:abl_f1}
  \end{subfigure}
  \vskip -1em
  \caption{Fairness improvements on different GNNs.}
  \vskip -0.8em
  \label{fig:backbone}
\end{figure}
\noindent \textbf{Generalization to Various GNN Architectures}. In addition, to further verify the structural fairness of graphs guided by {\method}, we  evaluate the impact of fairness-guided links on two different GNN backbones, i.e. , GraphSage~\cite{hamilton2017inductive} and APPNP~\cite{gasteiger_predict_2019}. Results in fairness are presented in Fig.~\ref{fig:backbone}. From Fig~\ref{fig:backbone}, we observe that different GNN backbones consistently achieve fairness improvements when trained on graphs guided by {\method}. This demonstrates that {\method} can grow fair graph structures that facilitate different GNN models.

\vspace{0.3em}
\noindent \textbf{Trade-off between Utility and Fairness}.   We visualize the F1-$\Delta_{SP}$ of different methods by varying the strength of fairness guidance. Specifically, we vary the percentage of adding fairness-guided links from 0.5\% to 3\%. The results are given in Fig.~\ref{fig:tradeoff}. 
Scattered points in various colors represent results from different methods. We further fit a straight line for each method. From these two figures, we observe that great improvements fairness are always accompanied with a decrease in task performance. And compared to other baseline models, {\method} achieves a better utility-fairness balance. {\method} either have better task performance at the same fairness level or reduce more bias at equivalent task performance.

\begin{figure}[t]
  \centering
  \begin{subfigure}[b]{0.48\linewidth}  
    \includegraphics[width=\textwidth]{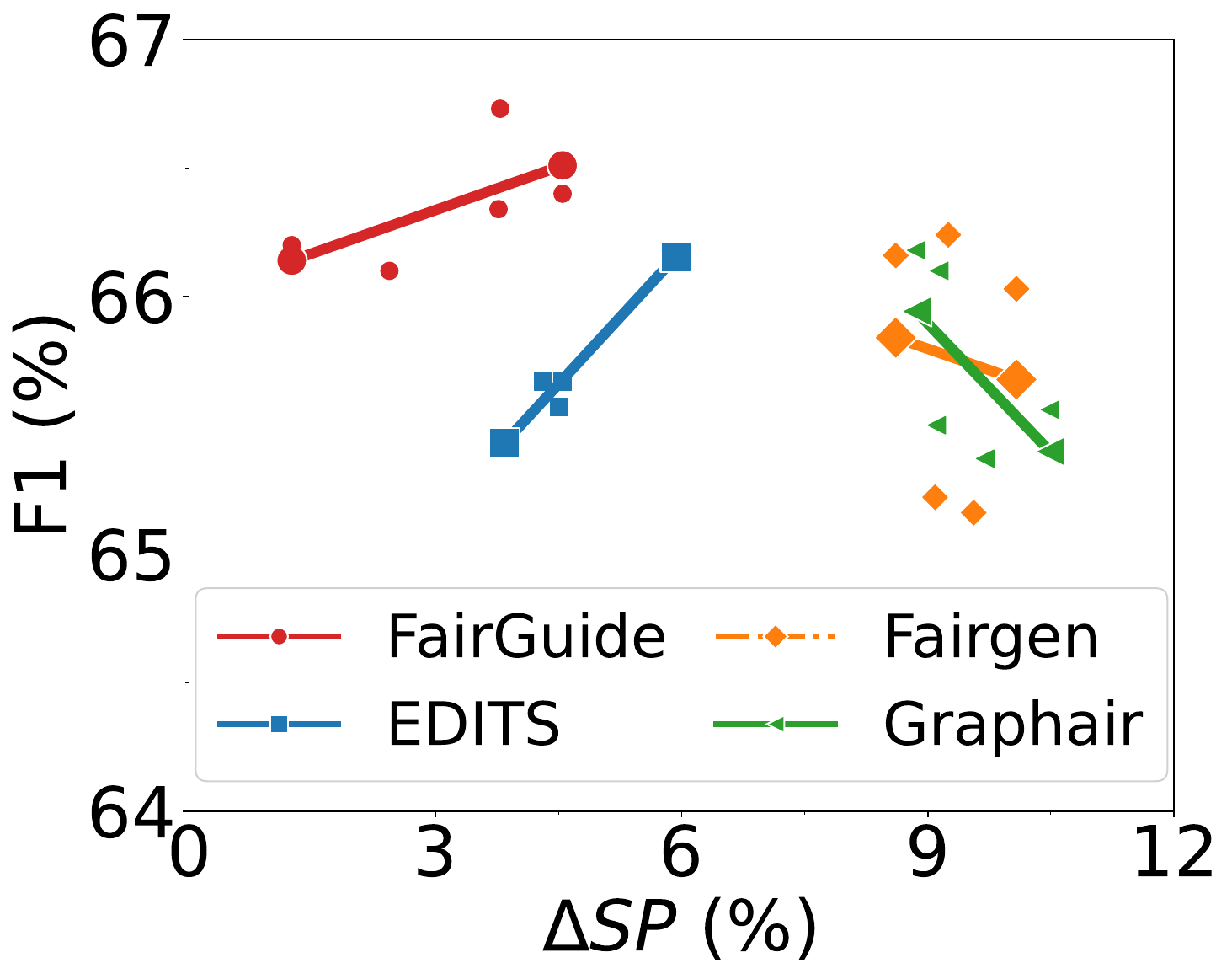}  
    \caption{Pokec-n}
    \label{fig:poken_add}
  \end{subfigure}
  \begin{subfigure}[b]{0.48\linewidth}
    \includegraphics[width=\textwidth]
    {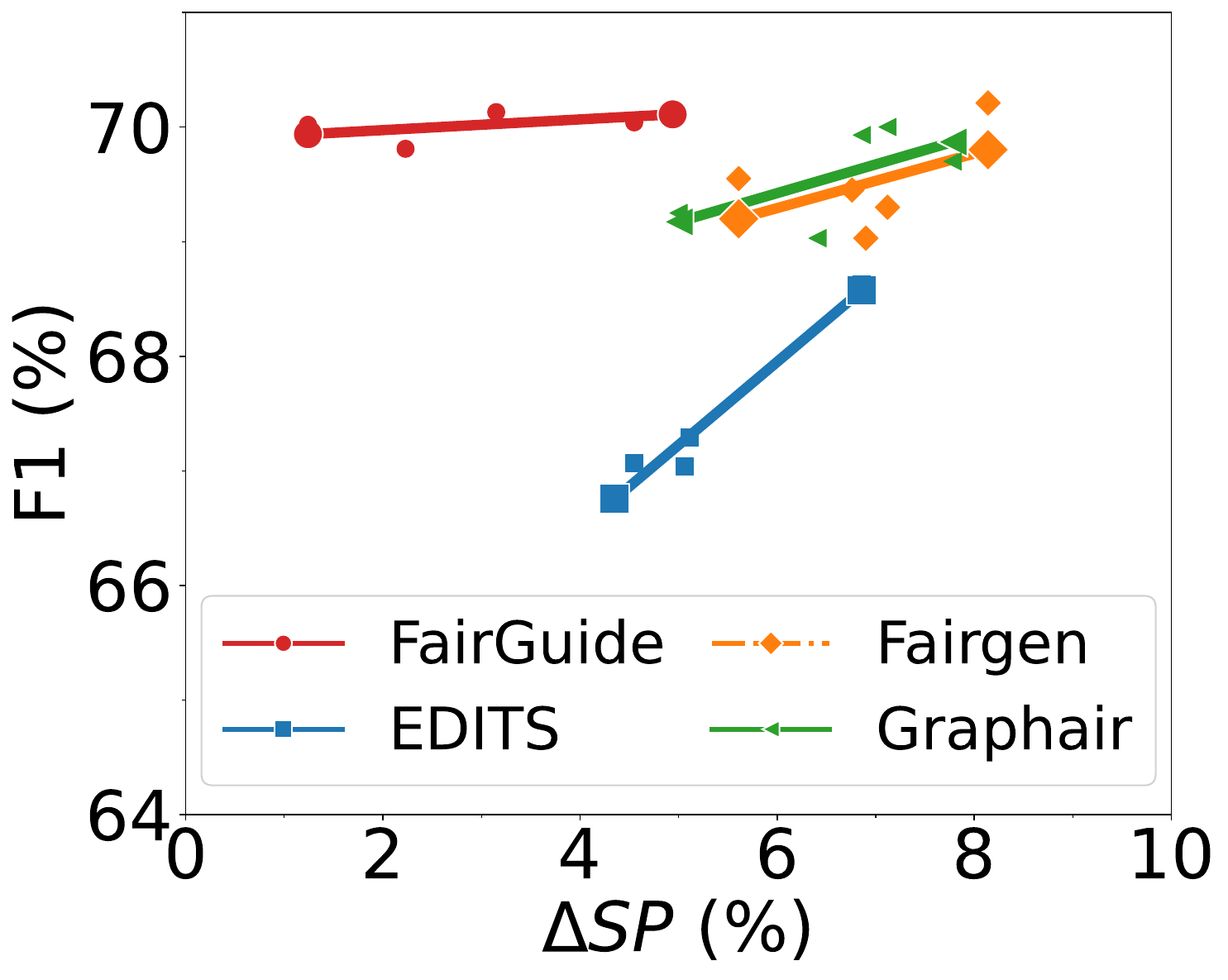}  
    \caption{Pokec-z}
    \label{fig:pokez_trend}
  \end{subfigure}
  \vskip -1em
  \caption{Visualization of trade-off between utility and fairness. Methods in the upper-left region are better.}
  \label{fig:tradeoff}
\end{figure}

\subsection{Impacts of the Number of New Links}
To answer the \textbf{RQ2},  we we vary edge adding percentage  based on the existing edges from 0.5\% to 3\% to investigate the impact of link addition size. We conduct experiments for node classification task based on GCN. The impacts of link addition rate on pokec-n and pokec-z are shown in Fig.~\ref{fig:rate_sensitivity}. We have similar observations on other datasets. From these results, we  can observe that with {\method}, the fairness of the downstream GNN classifier improves as the percentage of added edges increases. During this process, {\method} consistently outperforms baseline methods in terms of fairness while preserving utility, which demonstrates the effectiveness of {\method} in guiding graph structures toward fairness.

\begin{figure}[t]
  \centering
  \begin{subfigure}{0.48\linewidth}  
    \includegraphics[width=\textwidth]{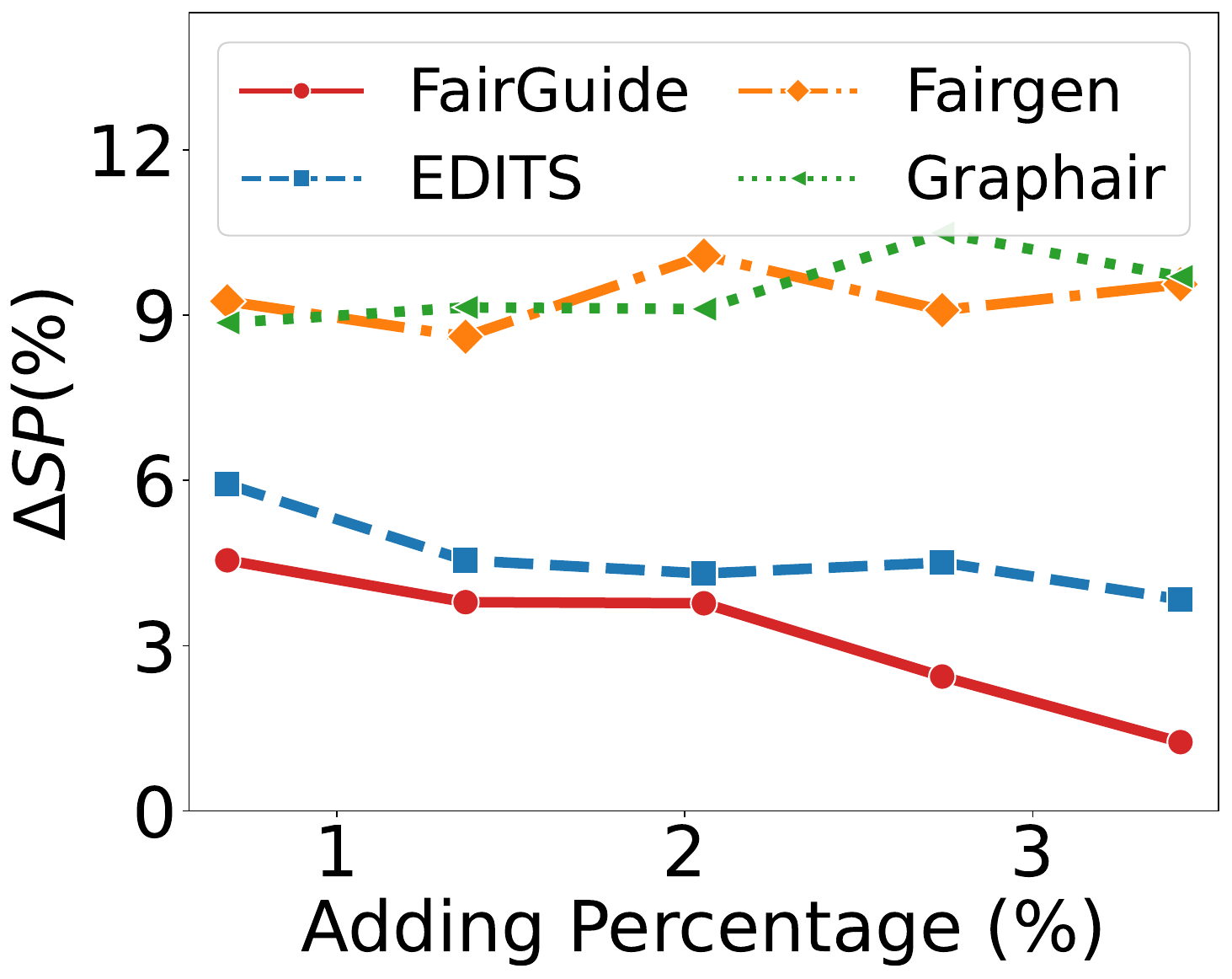}  
    \vspace{-1.8em}
    \caption{$\Delta_{SP}$(\%) on Pokec-n}
    \vspace{0.5em}
  \end{subfigure}
  \begin{subfigure}{0.48\linewidth}
    \includegraphics[width=\textwidth]
    {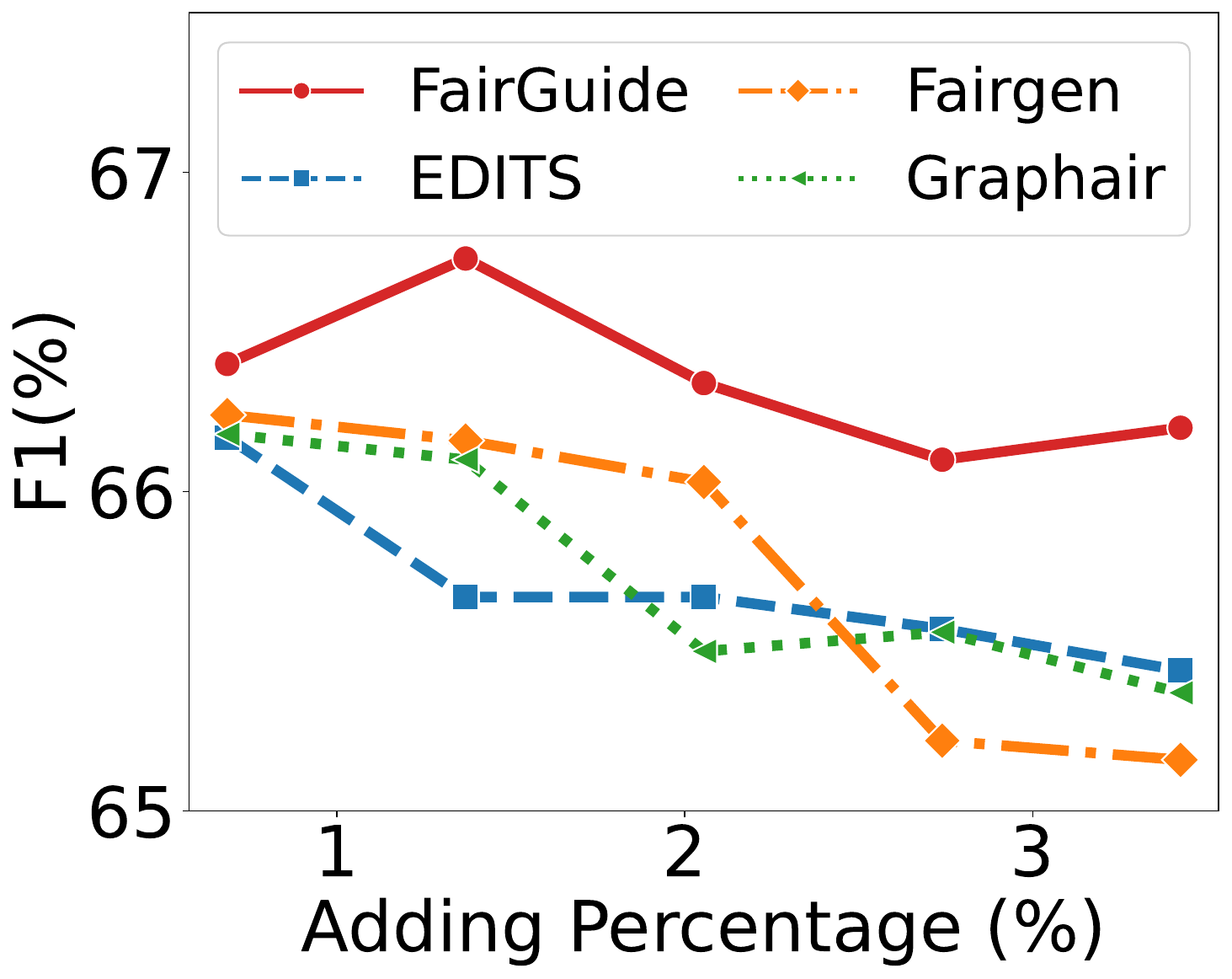}  
    \vspace{-1.8em}
    \caption{F1(\%) on Pokec-n}
    \vspace{0.5em}
  \end{subfigure}
  \begin{subfigure}{0.48\linewidth}  
    \includegraphics[width=\textwidth]{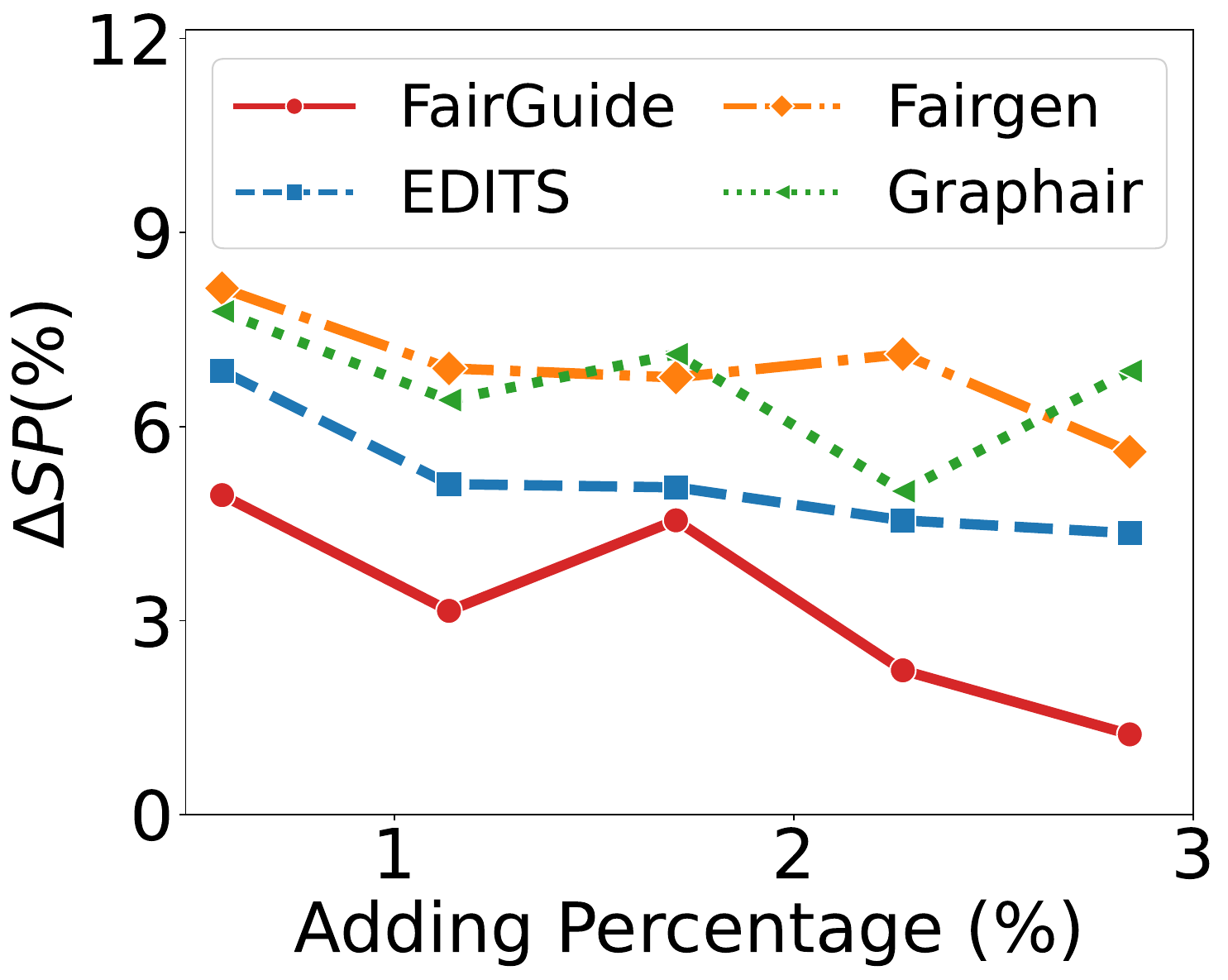}  
    \vspace{-1.8em}
    \caption{$\Delta_{SP}$(\%) on Pokec-z}
  \end{subfigure}
  \begin{subfigure}{0.48\linewidth}
    \includegraphics[width=\textwidth]
    {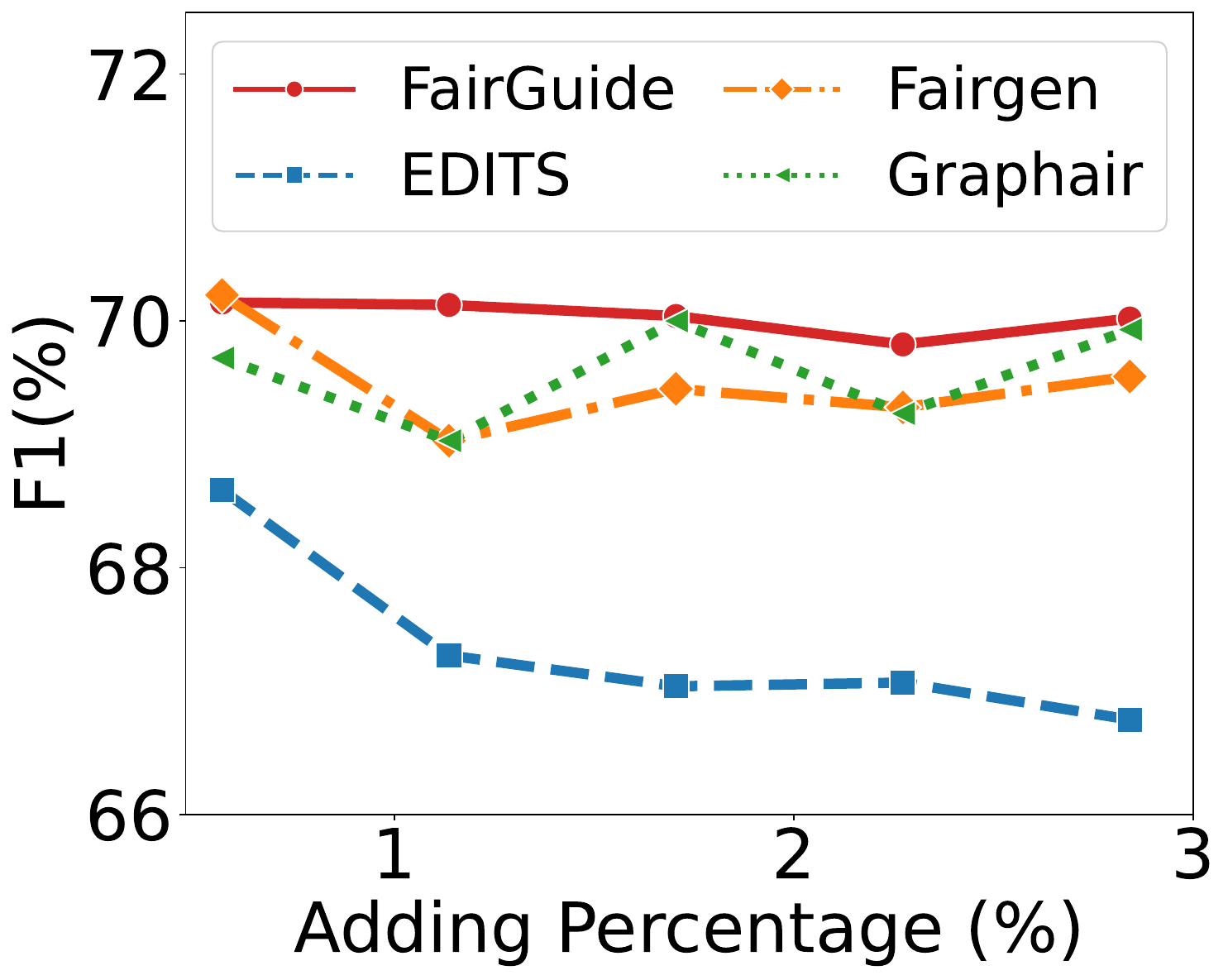}  
    \vspace{-1.8em}
    \caption{F1(\%) on Pokec-z}
  \end{subfigure}
  \vskip -1em
  \caption{Impacts of number of added links.}
  \vskip -1em
  \label{fig:rate_sensitivity}
\end{figure}

\begin{figure}[b]
  \centering
  \begin{subfigure}[b]{0.48\linewidth}  
    \includegraphics[width=\textwidth]{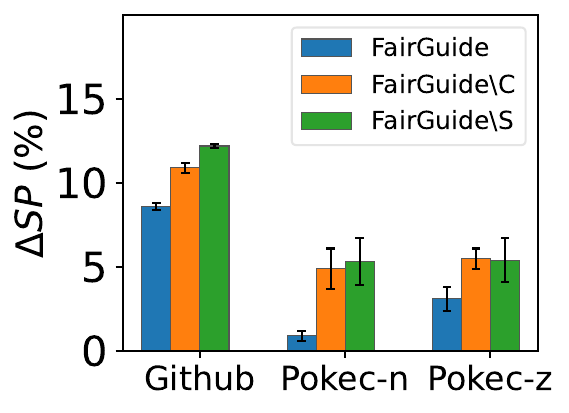} 
    \vspace{-1em}
    \caption{$\Delta$SP (\%)}
    \label{fig:abl_sp}
  \end{subfigure}
  \begin{subfigure}[b]{0.48\linewidth}
    \includegraphics[width=\textwidth]{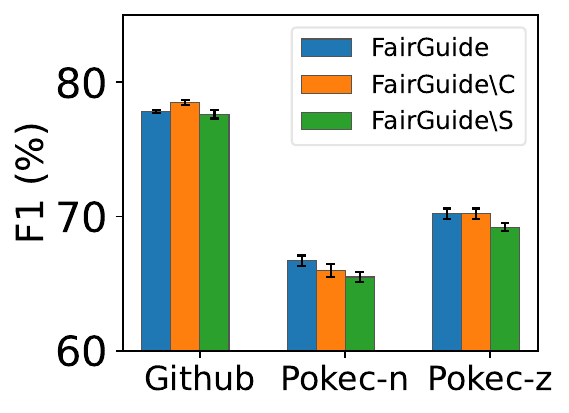}  
    \vspace{-1em}
    \caption{F1 (\%)}
    \label{fig:abl_f1}
  \end{subfigure}
  \vskip -1em
  \caption{Ablation study on three real-world user graphs.}
  \label{fig:ablation}
\end{figure}

\subsection{Ablation Study}
{\method} relies on two critical components to effectively guide the fairness of graph structures: the pseudo downstream task and link addition with Gumbel-max sampling. 
To assess the impact of these components to answer \textbf{RQ3}, we conduct ablation studies using the GCN-based node classification task. Specifically, to demonstrate the importance of employing community detection as the pseudo downstream task, we compare with a variant named {\method}$\backslash$C where the initial community labels are replaced with randomly generated labels.
Additionally, to evaluate the effectiveness of Gumbel-max sampling in the fairness-guided link addition , we introduce another variant, {\method}$\backslash$S, in which links are determined through a single iteration without dynamic updates.  The results  are shown in Fig.~\ref{fig:ablation}. From these results, we observe:
\begin{itemize}[leftmargin=*]
    \item {\method} achieves significantly smaller $\Delta_{SP}$ compared to the variant {\method}$\backslash$C, where the community detection labels are replaced by propagated random labels. This is because community labels naturally correlate with labels in various downstream tasks. Thus, incorporating community detection as a pseudo task effectively enhances fairness in downstream tasks, which empirically verifies our theoretical analysis.
    \item  {\method}$\backslash$S consistently underperforms {\method} in fairness evaluations across all datasets, highlighting the importance of dynamic link addition via Gumbel-max sampling.
\end{itemize}

\subsection{Hyperparameter Analysis}
In {\method}, the cross-group boost rate $\beta$ and the number of community clusters C are two key hyperparameters.To investigate how the two hyperparameters affect fairness and task performance, We conduct GCN node classification task on the Pokec-n. Similar trends are also observed in other datasets. We vary $\beta$ to \{0.01,0.1,1.0,10.0,100.0\} and $C$ to \{5,10,15,20,25\}, respectively. All other settings follow the description given in Sec.~\ref{sec:set_up}.
As shown in Fig.~\ref{fig:sensitivity}, the task performance F1 score remains stable across the tested hyperparameter ranges, fluctuating by only around 1–2 percentage points. For the fairness performance, {\method} achieves a strong debiasing effect when $C \ge 15$ and $\beta \ge 1.0 $. A larger number of clusters $C$ allows the model to capture more fine-grained feature information, and a large cross-group boost rate $\beta$ effectively establishes connections between different sensitive groups, which will benefit the graph structural fairness.

\begin{figure}[t]
  \centering
  \begin{subfigure}{0.48\linewidth}  
    \includegraphics[width=\textwidth]{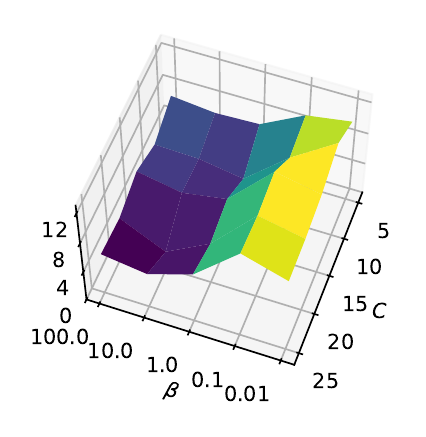}  
    \vspace{-1.5em}
    \caption{$\Delta_{SP}$ (\%)}
    \label{fig:sp}
  \end{subfigure}
  \begin{subfigure}{0.48\linewidth}
    \includegraphics[width=\textwidth]{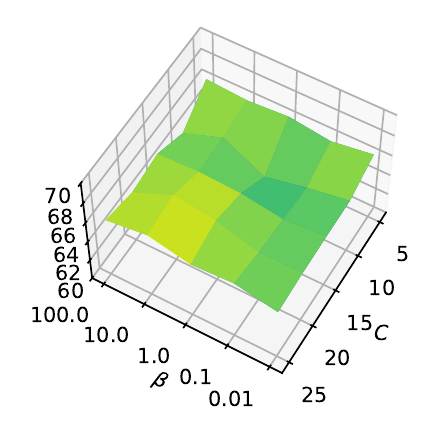}  
    \vspace{-1.5em}
    \caption{F1 score (\%)}
    \label{fig:auc}
  \end{subfigure}
  \vskip -0.5em
  \caption{Hyperparameter sensitivity analysis.}
  \label{fig:sensitivity}
\end{figure}

\section{Conclusion and Future Work}
In this work, we study a novel problem of fairness guidance via link addition which aims to guide existing graph structures toward fairness. To solve this problem, we form it as bi-level optimization problem and propose an algorithm called {\method} which utilizes the  differentiable pseudo community detection task to optimize the graph structure. Specifically, we theoretically demonstrate generalize fairness can be achieved by reducing the bias in pseudo task via new links.  To efficiently identify beneficial new links, we compute meta-gradients with respect to the graph structure and adopt an edge-sampling technique to discretely update the structure. Experiment results on real-world datasets show the effectiveness of our proposed algorithm in solving the fair guidance problem. For the future work, there are several directions to investigate. First, in this paper links are added just according to the fairness without consideration of enhancing the structure utility . In future we will explore how to add new links to simultaneously improve the task performance and fairness. Second, solving various sensitive attribute by adding links is also a interesting problem to investigate. We plan to extend the FairGuide to achieve more general fairness on diverse sensitive attributes.

\appendix

\section{Proof of Theorem 1}

\begin{proof}
To prove this theorem, we first introduce the following lemmas and theorem:

\begin{lemma}
Given a unit sphere centered at origin \( O \), let \( A, B, C \) be three points on the surface. Assume angles \( AOB = \theta_1 \) and \( BOC = \theta_2 \). Then, the cosine of angle \( AOC \) satisfies:
\[
\cos(AOC) \in [\cos(\theta_1 + \theta_2), \cos(\theta_1 - \theta_2)].
\]
\end{lemma}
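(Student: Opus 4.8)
The plan is to prove Lemma~1 by translating the angular relation into an algebraic identity for inner products of unit vectors, after which the claimed interval drops out of the Cauchy--Schwarz inequality combined with the product-to-sum formulas for cosine. Concretely, I would identify $A,B,C$ with unit vectors $\mathbf{a},\mathbf{b},\mathbf{c}\in\mathbb{R}^n$, so that $\mathbf{a}\cdot\mathbf{b}=\cos\theta_1$, $\mathbf{b}\cdot\mathbf{c}=\cos\theta_2$, and the quantity to be bounded is $\cos(\angle AOC)=\mathbf{a}\cdot\mathbf{c}$. Throughout, I use that $\theta_1,\theta_2\in[0,\pi]$ (the natural range for angles between points on a sphere), so that $\sin\theta_1,\sin\theta_2\ge 0$.

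The key step is an orthogonal decomposition relative to $\mathbf{b}$. Writing $\mathbf{a}=\cos\theta_1\,\mathbf{b}+\sin\theta_1\,\mathbf{u}$ and $\mathbf{c}=\cos\theta_2\,\mathbf{b}+\sin\theta_2\,\mathbf{v}$ with $\mathbf{u},\mathbf{v}$ unit vectors orthogonal to $\mathbf{b}$ (this is legitimate because $\|\mathbf{a}-(\mathbf{a}\cdot\mathbf{b})\mathbf{b}\|^{2}=1-\cos^{2}\theta_1=\sin^{2}\theta_1$, and similarly for $\mathbf{c}$), expansion gives
\[
\mathbf{a}\cdot\mathbf{c}=\cos\theta_1\cos\theta_2+\sin\theta_1\sin\theta_2\,(\mathbf{u}\cdot\mathbf{v}).
\]
Since $\mathbf{u}\cdot\mathbf{v}\in[-1,1]$ by Cauchy--Schwarz, it follows that
\[
\mathbf{a}\cdot\mathbf{c}\in\big[\cos\theta_1\cos\theta_2-\sin\theta_1\sin\theta_2,\ \cos\theta_1\cos\theta_2+\sin\theta_1\sin\theta_2\big]=\big[\cos(\theta_1+\theta_2),\ \cos(\theta_1-\theta_2)\big],
\]
which is precisely the assertion of the lemma.

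I do not expect a substantial obstacle here; the only points needing a line of care are (i) the sign of $\sin\theta_1,\sin\theta_2$, handled by restricting to $[0,\pi]$, and (ii) the degenerate cases $\theta_1$ or $\theta_2\in\{0,\pi\}$, where the unit vectors $\mathbf{u},\mathbf{v}$ are not uniquely defined but the corresponding coefficient vanishes, so the identity and hence the bound still hold (and the interval collapses consistently). An alternative derivation is to invoke the spherical law of cosines, $\cos(\angle AOC)=\cos\theta_1\cos\theta_2+\sin\theta_1\sin\theta_2\cos\gamma$, where $\gamma$ is the dihedral angle of the spherical triangle at $B$, and then use $\cos\gamma\in[-1,1]$; this gives the same identity but presupposes the spherical cosine rule, so I would favor the self-contained vector computation above, which also makes transparent that the monotonic decrease of $\cos$ on $[0,\pi]$ is exactly what converts the triangle-inequality bounds $|\theta_1-\theta_2|\le\angle AOC\le\theta_1+\theta_2$ into the stated cosine interval — the form in which the lemma will be applied in the proof of Theorem~1.
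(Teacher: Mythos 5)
Your proof is correct, and it rests on exactly the same identity as the paper's: the paper invokes the spherical law of cosines, $\cos\theta_3 = \cos\theta_1\cos\theta_2 + \sin\theta_1\sin\theta_2\cos B'$, and bounds $\cos B'\in[-1,1]$, while you derive that very identity from scratch via the orthogonal decomposition $\mathbf{a}=\cos\theta_1\,\mathbf{b}+\sin\theta_1\,\mathbf{u}$, $\mathbf{c}=\cos\theta_2\,\mathbf{b}+\sin\theta_2\,\mathbf{v}$ and then bound $\mathbf{u}\cdot\mathbf{v}\in[-1,1]$ by Cauchy--Schwarz. The only substantive difference is that your version is self-contained (and works verbatim in $\mathbb{R}^n$, which is the setting actually needed when the lemma is applied to z-score vectors in the paper's Lemma~2), whereas the paper cites the spherical cosine rule as an external fact stated for the $2$-sphere; you also explicitly dispose of the degenerate cases $\theta_1,\theta_2\in\{0,\pi\}$, which the paper passes over. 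What the paper's phrasing buys is brevity; what yours buys is rigor and the correct level of generality for the downstream application. No gaps.
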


\begin{proof}
From the spherical law of cosines:
\[
\cos\theta_3 = \cos\theta_1\cos\theta_2 + \sin\theta_1\sin\theta_2\cos B',
\]
where \( B' \) is the angle opposite to \( B \) in the spherical triangle \( ABC \). Since angles are in \([0, \pi]\), we have:
\begin{align*}
\cos\theta_3 &\geq \cos\theta_1\cos\theta_2 - \sin\theta_1\sin\theta_2 = \cos(\theta_1 + \theta_2), \\
\cos\theta_3 &\leq \cos\theta_1\cos\theta_2 + \sin\theta_1\sin\theta_2 = \cos(\theta_1 - \theta_2).
\end{align*}
Thus, the lemma is proven.
\end{proof}

\begin{lemma}
Given two random variables \( X \) and \( Y \), the Pearson correlation coefficient is equivalent to the cosine similarity between their z-score vectors \( x' \) and \( y' \), where:
\[
x'_i = \frac{X_i - \mu_X}{\sigma_X}, \quad y'_i = \frac{Y_i - \mu_Y}{\sigma_Y}.
\]
\end{lemma}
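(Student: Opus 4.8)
The plan is to read both random variables in their finite-sample form, so that $X$ and $Y$ are realized as $n$-dimensional observation vectors $(X_1,\dots,X_n)$ and $(Y_1,\dots,Y_n)$, with $\mu_X,\mu_Y$ the sample means and with standard deviations taken in the population convention $\sigma_X^2 = \frac{1}{n}\sum_i (X_i-\mu_X)^2$. Under this reading the expectation appearing in the Pearson coefficient is the empirical average, and the z-score vectors $x',y'$ become genuine vectors of $\mathbb{R}^n$. The claim then reduces to a direct algebraic identity between $\rho_{X,Y}$ and the cosine similarity $\langle x',y'\rangle/(\|x'\|\,\|y'\|)$, so the whole argument is a computation rather than a structural proof.

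First I would expand the inner product of the z-score vectors, pulling the constant normalizers outside the sum:
\[
\langle x',y'\rangle = \sum_{i=1}^n \frac{(X_i-\mu_X)(Y_i-\mu_Y)}{\sigma_X\sigma_Y} = \frac{1}{\sigma_X\sigma_Y}\sum_{i=1}^n (X_i-\mu_X)(Y_i-\mu_Y).
\]
The remaining sum equals $n\,\mathbb{E}[(X-\mu_X)(Y-\mu_Y)]$, the empirical covariance scaled by $n$, so the numerator of the cosine similarity is $n\,\mathbb{E}[(X-\mu_X)(Y-\mu_Y)]/(\sigma_X\sigma_Y)$. The key step is the observation that z-score normalization fixes the length of each vector independently of the data: computing $\|x'\|^2 = \sum_i (X_i-\mu_X)^2/\sigma_X^2 = (n\sigma_X^2)/\sigma_X^2 = n$, and likewise $\|y'\|^2 = n$, yields $\|x'\|\,\|y'\| = n$. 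Substituting into the cosine formula, the factor $n$ in the numerator cancels against $\|x'\|\,\|y'\| = n$ in the denominator, leaving $\mathbb{E}[(X-\mu_X)(Y-\mu_Y)]/(\sigma_X\sigma_Y)$, which is exactly $\rho_{X,Y}$ by the definition of the Pearson correlation coefficient. This establishes the equivalence.

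I do not expect a genuine obstacle here; the only point deserving care is the normalization convention for $\sigma_X$. The cancellation relies on the denominator used in $\sigma_X^2$ matching the summation length so that $\|x'\|^2$ is a clean constant. Had one instead adopted the unbiased $(n-1)$ convention, the identity still holds, because the same constant then appears in both $\sigma_X$ (hence in the norm) and in the covariance sum, and therefore divides out; thus the conclusion is invariant to this choice. I would simply fix the convention explicitly at the outset so that the cancellation stays transparent, and note that the resulting identity is precisely the geometric fact used in Lemma~1, where correlations are treated as cosines of angles between such normalized vectors.
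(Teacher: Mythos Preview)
Your argument is correct and follows the same direct computational route as the paper, simply expanding the inner product and observing that z-scoring forces $\|x'\|^2=\|y'\|^2=n$ so the factor cancels. If anything, your treatment is more careful than the paper's one-line derivation, which writes $\rho_{X,Y}=\lim_{n\to\infty}\sum_i x'_i y'_i=\cos(x',y')$ without making the norm cancellation or the normalization convention explicit.
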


\begin{proof}
By definition, the Pearson correlation coefficient is:
\[
\rho_{X,Y} = \frac{\mathbb{E}[(X - \mu_X)(Y - \mu_Y)]}{\sigma_X \sigma_Y} = \lim_{n \to \infty} \sum_{i=1}^{n} x'_i y'_i = \cos(x', y').
\]
Thus, the lemma is proven.
\end{proof}

\begin{theorem}
Given three random variables \( X, Y, Z \) with correlation coefficients \( \rho_{X,Y} = \cos\alpha \) and \( \rho_{Y,Z} = \cos\beta \), then:
\[
\rho_{X,Z} \in [\cos(\alpha + \beta), \cos(\alpha - \beta)].
\]
\end{theorem}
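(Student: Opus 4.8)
The plan is to derive this statement as an immediate corollary of the two lemmas just established. First I would use Lemma~2 to translate the probabilistic hypotheses into geometry: replace $X$, $Y$, $Z$ by their z-score vectors $x'$, $y'$, $z'$, which are unit vectors in the (limiting $\ell^2$) sense of Lemma~2, and observe that the Pearson correlation between any two of the variables equals the cosine of the angle between the corresponding z-score vectors. Writing $\theta_1$ for the angle between $x'$ and $y'$ and $\theta_2$ for the angle between $y'$ and $z'$ -- both taken in $[0,\pi]$, which is always legitimate for an angle between two vectors -- the hypotheses $\rho_{X,Y}=\cos\alpha$ and $\rho_{Y,Z}=\cos\beta$ then let me take $\theta_1=\alpha$ and $\theta_2=\beta$, after noting that we may assume $\alpha,\beta\in[0,\pi]$ without loss of generality, since $\cos$ restricted to $[0,\pi]$ is a bijection onto $[-1,1]$ and hence any prescribed correlation value is realized by a unique such angle.

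Next I would invoke Lemma~1 with $A$, $B$, $C$ chosen as the endpoints of $x'$, $y'$, $z'$ on the unit sphere. The three z-score vectors span a subspace of dimension at most three, so all three points lie on a genuine $2$-sphere and Lemma~1 applies directly: the angle $AOC$ between $x'$ and $z'$ satisfies $\cos(AOC)\in[\cos(\theta_1+\theta_2),\cos(\theta_1-\theta_2)]=[\cos(\alpha+\beta),\cos(\alpha-\beta)]$. Applying Lemma~2 once more, $\cos(AOC)=\rho_{X,Z}$, which gives exactly the claimed inclusion $\rho_{X,Z}\in[\cos(\alpha+\beta),\cos(\alpha-\beta)]$. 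This is precisely the bound needed to close the proof of Theorem~1, with $X=S$, $Y=C$, $Z=\hat Y$.

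The only place that requires genuine care -- and hence the step I would slow down on -- is the handling of the degenerate configurations: when $x'$, $y'$, $z'$ are collinear or span a subspace of dimension at most two (for instance $\alpha=0$, $\beta=0$, or $\alpha+\beta\in\{0,\pi,2\pi\}$), the "spherical triangle" in Lemma~1 collapses, and one should check that the inclusion still holds by direct inspection on a great circle, where it reduces to the elementary inequality $|\theta_1-\theta_2|\le\theta_3\le\min(\theta_1+\theta_2,\,2\pi-\theta_1-\theta_2)$ together with monotonicity of $\cos$ on $[0,\pi]$ and the identity $\cos(2\pi-\theta_1-\theta_2)=\cos(\theta_1+\theta_2)$. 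I would also double-check that Lemma~2's limiting interpretation of z-score vectors is used consistently so that "angle between $x'$ and $y'$" is well defined. Beyond these bookkeeping points there is no real computation: the argument is essentially "correlation is a cosine, so apply the spherical law of cosines."
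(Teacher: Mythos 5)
Your proposal follows exactly the paper's own route: Lemma~2 converts the correlations into cosines of angles between z-score unit vectors, and Lemma~1 (the spherical law of cosines bound) then yields $\rho_{X,Z}\in[\cos(\alpha+\beta),\cos(\alpha-\beta)]$. Your additional attention to the degenerate collinear/coplanar configurations and to fixing $\alpha,\beta\in[0,\pi]$ is a welcome refinement that the paper's terser proof omits, but the argument is the same.
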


\begin{proof}
From Lemma 2, the correlation coefficients correspond to cosine similarities between z-score vectors. Applying Lemma 1 with these vectors yields:
\[
\rho_{X,Z} = \cos(x', z') \in [\cos(\alpha + \beta), \cos(\alpha - \beta)].
\]
Thus, the theorem is proven.
\end{proof}

Now, applying Theorem 2 directly, we have:

Given \(\rho_{C, \hat{Y}} = \cos\alpha\), and \(\rho_{S,C}\) within \([\frac{\pi}{2}+\delta, \frac{\pi}{2}-\delta]\), we obtain:
\[
\rho_{S, \hat{Y}} \in \left[\cos\left(\frac{\pi}{2} + \delta + \alpha\right), \cos\left(\frac{\pi}{2} - \delta - \alpha\right)\right].
\]
Thus, the theorem is proven.
\end{proof}

\section{Case Study}
To further investigate how our method impacts real-world communities, we provide one case study in Github dataset. Specifically, we examine the percentage of connection between Bangladeshi developers and other countries developers, which is shown in Fig. ~\ref{fig:analysis}. 

After applying {\method}, we observe increased connectivity between developers from Bangladesh and those developed countries. These results demonstrate the practical effectiveness of our method in guiding the growing of community toward fairness.

\begin{figure}[t]
  \centering
  \includegraphics[width=0.7\linewidth]{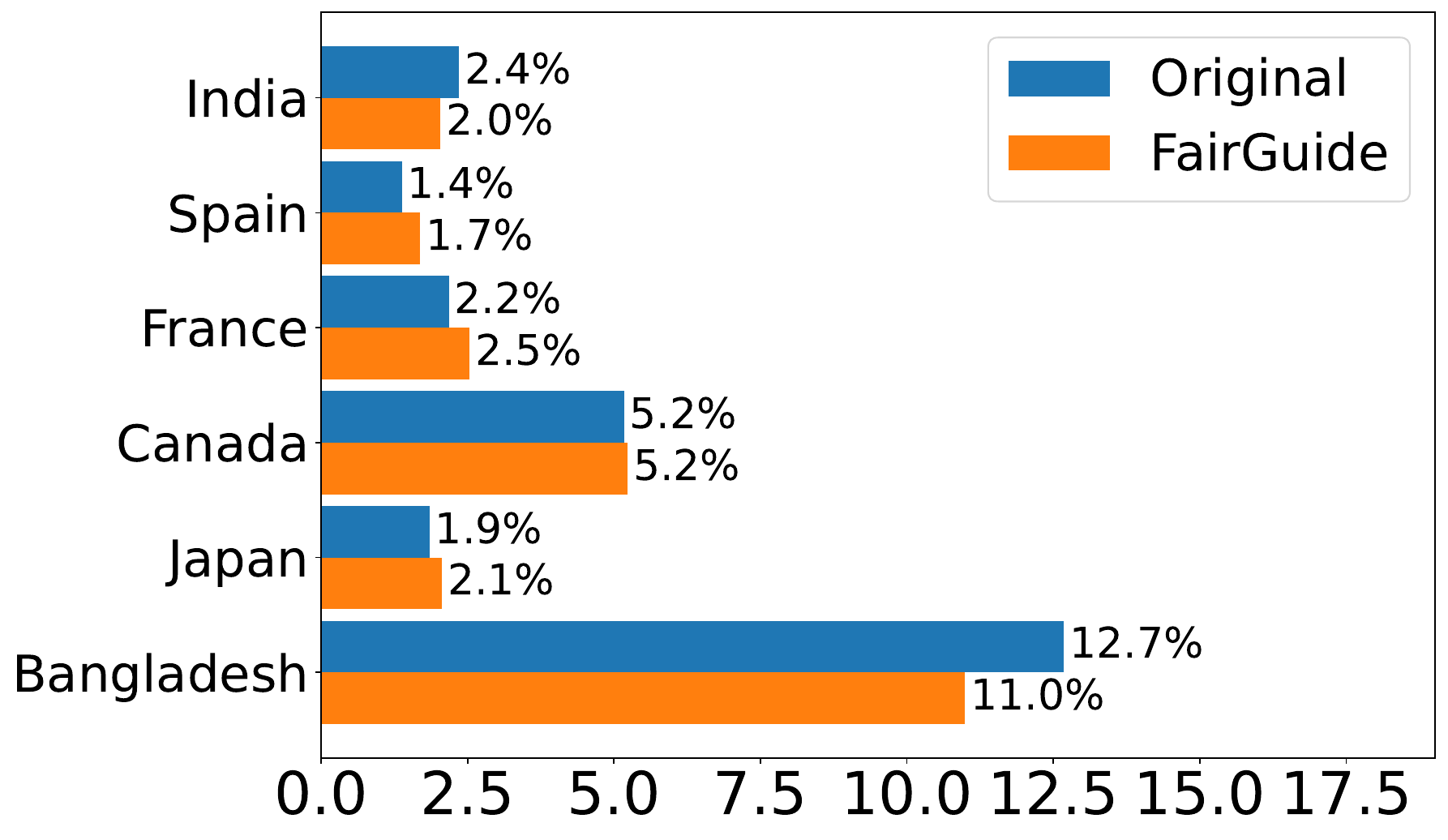}
  \vspace{-1em}
  \caption{Percentage of Links to Bangladesh Developers.}
  \vspace{-1em}
  \label{fig:analysis}
\end{figure}

\section{Algorithm of {\method}}
\label{app:algorithm}
As shown in Algorithm~\ref{alg:1}, {\method} first initializes community labels by K-means and achieve differential community detection. Then, {\method} further computes the $\Delta_{SP}$ metric and meta-gradient to the graph structure. Finally, new links sampled based on meta-gradients of structure are selected. 

\vspace{-0.5em}
\begin{algorithm}[h]
\caption{Algorithm of {\method}}
\KwIn{$A$: adjacency matrix; $X$: node feature;  $\Delta$: link addition constraint; $K$: number of  aggregation steps; $\alpha$: restart parameter; $S$: sensitive attribute vector}
\KwOut{Fairness-guided graph structure ${\mathbf{A}^\prime}$ with  new links}

${\mathbf{A}^\prime} \gets \mathbf{A}$\; 
$\mathcal{\mathbf{C_{init}}} \gets \text{K-Means}(f_\text{MLP}(X))$ \\

\While{ \text{number of added links do not reach $\Delta$}}{
    \textbf{Differential Community Detection:}\\

    Obtains the community labels $\mathbf{C^{(K)}}$ by Eq.(\ref{eq:comm_label})

    \textbf{Compute Fairness Loss:}\\
    $\Delta_{SP} \gets \Delta_{SP}(\mathbf{C^{(K)}},\mathcal{S})$ 

    \textbf{Compute Edge Score:}\\
    $\nabla_{\mathbf{A}^\prime}^{\text{meta}} \gets
    -\frac{\partial \Delta_{SP}}{\partial \mathbf{C}} \frac{\partial \mathbf{C}}{\partial {\mathbf{A}^\prime}}\odot (1 + \beta \cdot \mathbb{I}(s_i \neq s_j))$ ;

    \textbf{Gumbel Edge Sampling:}\\

    $G_{i,j} \sim \text{Gumbel}(0,1)$ \\
    $\mathbf{W} = \log( -\nabla_{\mathbf{A}^\prime}^{\text{meta}} \odot ({\mathbf{A}}^\prime == 0)$\\
    $\mathbf{P} =\frac{\log( \mathbf{W}+\epsilon)+ G_{i,j}}{\tau}$ ,
    $\delta \gets \text{Top}_k(\mathbf{P})$ 
    
    \textbf{Batch Edge Addition:}\\
    ${\mathbf{A}}^\prime \gets {\mathbf{A}}^\prime + \sum_{(i,j)\in\delta}\mathbf{E}_{i,j}$ 
}
\Return $\mathbf{A}'$;
\label{alg:1}
\end{algorithm}
\vspace{-1.5em}
\begin{table}[h]
  \centering
  \caption{ Running time of {\method} for adding one link. }
  \vskip -1em
  \label{tab:Running_time}
  \small 
  \begin{tabularx}{0.9\linewidth}{XCCC}
    \toprule
     {Dataset} &{Github} & {Pokec-n} & {Pokec-z} \\
    \midrule
      {Run Time} & 4.6 $\times 10^{-3}$s   & 1.6 $\times 10^{-2}$s  & 1.2 $\times 10^{-2}$s \\
    \bottomrule
  \end{tabularx}
  \vspace{-1em}
\end{table}

\bibliographystyle{acm}
\bibliography{references}

\end{document}